\documentclass[dvipsnames]{article} %
\usepackage{template_conference}

\usepackage{booktabs}
\usepackage{graphicx}
\usepackage{enumitem}
\usepackage{wrapfig}
\usepackage{algorithm}
\usepackage{algorithmic}
\usepackage[utf8]{inputenc}
\DeclareUnicodeCharacter{FF5E}{\textasciitilde}

\usepackage{adjustbox}
\usepackage{amsmath}
\usepackage{colortbl}
\usepackage[utf8]{inputenc}
\definecolor{lightgray}{rgb}{0.9,0.9,0.9}
\usepackage{caption}
\usepackage{subcaption}
\usepackage[svgnames]{xcolor}
\usepackage{setspace}
\usepackage{url}
\usepackage{multirow}
\usepackage{colortbl}
\usepackage{tabularx}
\usepackage{blindtext}
\usepackage{pgfplots}
\pgfplotsset{compat=1.18} 
\usepackage{tikz}
\usetikzlibrary{er,positioning}
\usepackage{makecell}
\usepackage{tipa}
\usepackage{siunitx}
\usepackage{tocloft}
\usepackage{listings}
\usepackage{adjustbox}
\usepackage{xurl}
\usepackage{rotating}
\usepackage[normalem]{ulem}
\usepackage{multicol}
\usepackage{titlesec}
\usepackage{tcolorbox}
\tcbuselibrary{breakable,theorems}
\usepackage{transparent}
\usepackage{mathtools}

\usepackage{amsthm}

\usepackage{amsfonts}
\usepackage{etoc}
\usepackage{color}
\usepackage{autobreak}
\usepackage{soul}
\usepackage{amssymb}
\usepackage{bbding}

\newtheorem{theorem}{Theorem}[section]

\newtheorem{lemma}[theorem]{Lemma}

\theoremstyle{definition}
\newtheorem{definition}[theorem]{Definition}

\theoremstyle{remark}

\newtcolorbox{codeblock}{
  colback=gray!10,   
  colframe=gray!50,  
  boxrule=0.5mm,     
  arc=2mm,           
  left=5pt,          
  top=5pt,          
  bottom=5pt,        
}

\useunder{\uline}{\ul}{}

\usepackage{amsmath,amsfonts,bm}

\def\eqref#1{equation~\ref{#1}}

\def\1{\bm{1}}

\DeclareMathAlphabet{\mathsfit}{\encodingdefault}{\sfdefault}{m}{sl}
\SetMathAlphabet{\mathsfit}{bold}{\encodingdefault}{\sfdefault}{bx}{n}

\renewcommand{\ghlink}{YourGithubLink}

\newcommand*\justify{%
  \fontdimen2\font=0.4em
  \fontdimen3\font=0.2em
  \fontdimen4\font=0.1em
  \fontdimen7\font=0.1em
  \hyphenchar\font=`\-
}

\renewcommand{\texttt}[1]{%
  \begingroup
  \ttfamily
  \begingroup\lccode`~=`/\lowercase{\endgroup\def~}{/\discretionary{}{}{}}%
  \begingroup\lccode`~=`[\lowercase{\endgroup\def~}{[\discretionary{}{}{}}%
  \begingroup\lccode`~=`.\lowercase{\endgroup\def~}{.\discretionary{}{}{}}%
  \catcode`/=\active\catcode`[=\active\catcode`.=\active
  \justify\scantokens{#1\noexpand}%
  \endgroup
}

\definecolor{blueviolet}{RGB}{138,43,226}
\newtcolorbox{insightblock}{
  colback=blueviolet!5,   
  colframe=blueviolet!50!black!50!,    
  boxrule=0.5mm,       
  arc=2mm,            
  left=0pt,            
  right=8pt,        
  top=8pt,             
  bottom=8pt,         
}

\newtcolorbox{abstractbox}{
    colback=blue!5!white,     
    frame empty,             
    boxrule=1pt,             
    arc=4mm,                 
    left=8pt,                 
    right=8pt,                
    top=8pt,                  
    bottom=8pt,                
    opacityback=0.9
}

\newtcolorbox{examplebox}[1][]{
    breakable,
    colback=black!4,
    colframe=black!35,
    boxrule=0pt,
    sharp corners,
    boxsep=0pt,
    left=8pt,
    right=7pt,
    top=6pt,
    bottom=6pt,
    before upper={
        \if\relax\detokenize{#1}\relax
        \else
            {\small\bfseries\color{black!85} #1\par}
            \vspace{3pt}
        \fi
    },
    before skip=0.8\baselineskip,
    after skip=0.8\baselineskip,
    pad at break*=1.2mm
}

\title{CARE: Causally-Aligned Reasoning Exploration for Medical Large Language Models}

\author{
\footnotesize
\textbf{Yucheng Zhou}\textsuperscript{*,1},~
\textbf{Peng Luo}\textsuperscript{*,1},~
\textbf{Qianning Wang}\textsuperscript{2},~
\textbf{Chengzhong Xu}\textsuperscript{1},~
\textbf{Jianbing Shen}\textsuperscript{\textdagger,1}
\\[2pt]
\footnotesize
\textsuperscript{1}SKL-IOTSC, CIS, University of Macau \quad
\textsuperscript{2}Auckland University of Technology\\
\scriptsize 
yucheng.zhou@connect.um.edu.mo
}

\begin{document}

\maketitle
\begin{abstractbox}
\begin{center}
\textbf{\Large Abstract}
\end{center}
Large Language Models (LLMs) have shown strong potential for medical reasoning, yet the scarcity and cost of expert-annotated data constrain their progress. While reinforcement learning offers a scalable alternative, standard outcome-based methods in medicine often suffer from autoregressive credit assignment failure and gradient variance explosion. This leads to the \emph{``Right Answer, Wrong Reason''} trap, where models inadvertently reinforce spurious correlations and dataset shortcuts rather than valid clinical deduction. In this work, we propose \textbf{Causally-Aligned Reasoning Exploration (CARE)}, a theoretically grounded framework for intrinsic experience curation. CARE is built upon two rigorous conditions for high-quality training trajectories: Causal Sufficiency, which utilizes an agreement-based self-verification mechanism to mimic $do$-calculus interventions and effectively debias gradients; and Proximal Learnability, which employs dynamic entropy bounds to select experiences within the model's zone of proximal development for variance-bounded optimization. These rigorously filtered experiences are optimized via a dual-stream objective that combines on-policy group-relative exploration with difficulty-weighted experience replay. Extensive experiments on diverse medical multimodal and text-only benchmarks demonstrate that CARE consistently outperforms other strong competitors, substantially reducing correct-but-inconsistent reasoning and improving training stability.

\end{abstractbox}

\begingroup
\renewcommand{\thefootnote}{\fnsymbol{footnote}}
\footnotetext[1]{Equal contribution.}
\footnotetext[2]{Corresponding author.}
\endgroup

\section{Introduction}
\label{sec:intro}

Recent advances in Large Language Models (LLMs) have significantly expanded the scope of medical artificial intelligence, enabling AI systems to interpret complex clinical data, reason over multimodal inputs, and provide diagnostic support across a wide range of medical tasks \citep{zhou2026medical,advance-in-llm}.
The dominant training paradigm for such models remains Supervised Fine-Tuning (SFT) on curated medical datasets \citep{hulu-med,lingshu}.
While effective, SFT fundamentally depends on large volumes of expert-annotated data, which are costly, time-consuming, and often infeasible to scale in high-stakes medical domains.
Consequently, there is growing interest in self-improving paradigms, particularly reinforcement learning (RL), where models optimize their reasoning strategies by learning from their own generated trajectories \citep{ReST, deepseekmath,zhou2026compatibility}.

However, directly applying outcome-based RL (e.g., GRPO) to medical reasoning poses unique and critical challenges \citep{research-in-development}.
First, unlike domains such as mathematics or programming, where correctness can often be verified deterministically \citep{rlfreason}, medical reasoning lacks inexpensive and reliable automated verifiers.
Relying solely on outcome supervision (e.g., ground-truth diagnoses) creates a severe misalignment between the reward signal and the actual reasoning process.
In this work, we formally analyze this as the \emph{``Right Answer, Wrong Reason''} trap: due to autoregressive credit assignment failure, outcome-based rewards inevitably reinforce spurious correlations and dataset shortcuts. The model learns to guess the correct label without engaging in valid clinical deduction. Such causally misaligned trajectories may artificially inflate benchmark accuracy but severely compromise the model's reliability and interpretability in real-world clinical settings.

Second, medical datasets exhibit extreme heterogeneity in information density, ranging from trivial recognition tasks to ambiguous, underspecified queries.
Standard exploration strategies, which indiscriminately reinforce all correct rollouts, fail to account for the variance of the learning signal.
As we demonstrate theoretically, unbounded sequence likelihoods destabilize the policy gradient. Trivial samples yield vanishing gradients, while highly uncertain or hallucinatory rationales introduce exploding variance that destroys optimization stability.
Efficient self-improvement requires distinguishing \emph{effective learning experiences} from statistical noise, ensuring optimization occurs strictly within a stable region of the model's competence.

To overcome these fundamental limitations, we argue that robust medical self-improvement requires shifting from simple outcome monitoring to rigorous intrinsic experience curation.
We posit that a high-quality training trajectory must satisfy two theoretical conditions, which we rigorously formulate:
(1) \textbf{Causal Sufficiency}: The generated rationale must be causally sufficient to recover the final answer. We prove that enforcing this condition acts as a \emph{gradient debiasing operator}, ensuring the decision stems from logical reasoning rather than spurious priors; and
(2) \textbf{Proximal Learnability}: The trajectory should fall within a dynamic, entropy-bounded window, the model’s \emph{zone of proximal development}. We mathematically guarantee that this constraint provides \emph{variance-bounded optimization}, maximizing gradient utility while preventing instability.

Building on these theoretical foundations, we propose \textbf{Causally-Aligned Reasoning Exploration (CARE)}, a framework designed to seamlessly translate our theoretical principles into a practical RL algorithm.
CARE replaces naive filtration with a structured admission mechanism.
To enforce causal sufficiency, we introduce an Agreement-Based Self-Verification protocol, which accepts an experience only if the model can reproduce the final decision when conditioned \emph{solely} on the generated rationale (mimicking a $do$-calculus intervention).
To control optimization variance, we implement a Learnability-Aware Exploration mechanism, dynamically selecting experiences with normalized sequence likelihoods that offer the highest information gain.
These rigorously curated experiences are then optimized via a dual-stream objective, combining on-policy group-relative exploration with difficulty-weighted experience replay.

We evaluate CARE extensively on diverse medical multimodal and text-only benchmarks, including clinical VQA and complex diagnostic reasoning, and CARE consistently outperforms other models of a similar scale.
Crucially, our analysis reveals that CARE not only improves accuracy but also significantly reduces correct-but-inconsistent reasoning, validating that enforcing causal alignment effectively mitigates shortcut learning.

Our contributions are summarized as follows:
\begin{itemize}[leftmargin=*, itemsep=2pt, topsep=2pt, partopsep=2pt, parsep=2pt]
    \item We theoretically prove that standard outcome-based RL in medicine suffers from autoregressive credit assignment failure and from an explosion of gradient variance. This formal analysis exposes the fundamental ``Right Answer, Wrong Reason'' trap that reinforces spurious correlations.
    \item We propose CARE, a framework that reformulates self-improvement as intrinsic curation governed by \emph{Causal Sufficiency}. We implement this via Agreement-Based Self-Verification, which acts as a gradient debiasing operator to eliminate shortcut learning.
    \item We introduce a \emph{Proximal Learnability} mechanism that filters trajectories based on dynamic entropy bounds. This theoretically guarantees variance-bounded optimization, ensuring the model learns solely from effective experiences within its zone of proximal development.
    \item Extensive experiments across multimodal and text-only medical benchmarks demonstrate that CARE outperforms other models of a similar scale. 
\end{itemize}

\section{Related Work}
Foundational RL methods, evolving from Policy Gradient \citep{policy_gradient_methods} and Actor-Critic \citep{actor-critic, atari} to stable algorithms like PPO \citep{ppo,zhou2025improving} (addressing TRPO \citep{trpo} instability) and sequence modeling via Decision Transformers \citep{decision_transformer, offline-rl,zhou2026world}, have fundamentally shaped LLMs. While early medical LLMs such as HealthGPT \citep{healthGPT} and MMedLM \citep{mmedlm} primarily utilized pre-training and instruction tuning \citep{refining}, the adoption of RLHF \citep{RLHF} became standard for aligning models with medical values \citep{zhang2023huatuogpt, medprm}. To overcome context limitations and enhance clinical reasoning in complex scenarios \citep{yang2023zhongjing, capo}, researchers integrated multi-agent frameworks \citep{zhou2025mam, autonomous-agent, metaGPT, chatdev, llm-powered} and memory-augmented mechanisms \citep{memGPT, camel}. Specialized reasoning methods have also emerged, including StarPO \citep{StarPO} and GRPO \citep{GRPO,song2026broad}; notably, GRPO eliminates the value model to stabilize updates and has been successfully applied to medical multimodal models \citep{2025medvlm-r1, gmai-vl-r1, zheng2026clinical} and mathematical reasoning \citep{exgrpo}. Recent advancements focus on high-fidelity simulation \citep{baichuan-m2, llm-powered}, and intermediate reasoning trajectories, exemplified by HuatuoGPT-o1 \citep{2024huatuogpt-o1} (inspired by OpenAI o1 \citep{openai}). Furthermore, RL frameworks have expanded to multimodal domains \citep{vision-r1} through models like MedCCO \citep{medcco} and Med-R1 \citep{med-r1}. Current trends emphasize practical efficiency and comprehensive capabilities, e.g., multi-agent data generation \citep{reasonmed}, ``RL + LoRA'' efficient training \citep{rarl-vlm, lora}, and all-purpose processing (Lingshu \citep{lingshu} and Hulu-Med \citep{hulu-med}).

\section{Theoretical Analysis: Causal Misalignment and Optimization Stability}
\label{sec:theory}

In this section, we provide a formal analysis of the optimization dynamics in self-improving LLMs. We first prove that standard outcome-based RL is structurally prone to reinforcing spurious correlations due to autoregressive credit assignment failure. We then demonstrate that our proposed \textbf{Causal Sufficiency} mechanism acts as a gradient debiasing operator, while \textbf{Proximal Learnability} guarantees variance-bounded optimization.

\subsection{The Autoregressive Shortcut Trap}

Let $x \in \mathcal{X}$ be a clinical query. The LLM generates a trajectory $y$ composed of a latent rationale sequence $r = (r_1, \dots, r_{T})$ and a final answer $a$. The policy $\pi_\theta$ factorizes the joint probability via the chain rule:
\begin{align}
    \pi_\theta(r, a \mid x) = \underbrace{\pi_\theta(r \mid x)}_{\text{Rationale Generation}} \cdot \underbrace{\pi_\theta(a \mid x, r)}_{\text{Answer Prediction}}.
\end{align}
In standard outcome-based RL, the objective is to maximize the expected return $J(\theta) = \mathbb{E}_{y \sim \pi_\theta} [\mathcal{U}(a, a^*)]$, where $\mathcal{U}(a, a^*) = \mathbb{I}[a = a^*]$ is the binary correctness reward. The gradient estimator is:
\begin{align}
\label{eq:standard_pg}
    \nabla_\theta J(\theta) \approx \mathbb{E} \left[ \mathcal{U}(a, a^*) \left( \nabla_\theta \log \pi_\theta(r \mid x) + \nabla_\theta \log \pi_\theta(a \mid x, r) \right) \right].
\end{align}

\begin{definition}[Spurious Shortcut]\label{def:shortcut}
A rationale $r_S$ is a \emph{spurious shortcut} if it is logically insufficient to deduce $a^*$ (i.e., $a^* \not\perp \perp x \mid r_S$), but the model correctly predicts $a^*$ due to residual correlations between $x$ and $a^*$ in the pre-trained weights.
\end{definition}

\begin{theorem}[Credit Assignment Failure]
\label{thm:failure}
Under standard outcome supervision, if a spurious shortcut $r_S$ leads to a correct answer $a^*$, the policy gradient strictly increases the likelihood of $r_S$:
\begin{align}
    \mathbb{E} \left[ \nabla_\theta \log \pi_\theta(r_S \mid x) \cdot \mathbb{I}[a = a^*] \right] > 0.
\end{align}
\end{theorem}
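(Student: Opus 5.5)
The statement is informal: a vector being "$>0$" has to mean something. I will read it as a directional claim. The expected update, projected onto the ascent direction for $\log \pi_\theta(r_S \mid x)$, is strictly positive. Equivalently, a small step along the expected gradient of Eq.~\eqref{eq:standard_pg} strictly increases $\pi_\theta(r_S \mid x)$. The plan is to write the expectation over trajectories explicitly, isolate the contribution of the event $\{r = r_S\}$, and show this contribution has strictly positive inner product with $g_S := \nabla_\theta \log \pi_\theta(r_S \mid x)$.

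\textbf{Step 1.} Condition on the rationale and use the chain-rule factorization:
\begin{align}
    \mathbb{E}_{y\sim\pi_\theta}\!\left[\mathbb{I}[r=r_S]\,\nabla_\theta \log \pi_\theta(r_S \mid x)\,\mathbb{I}[a=a^*]\right]
    = \pi_\theta(r_S \mid x)\,\pi_\theta(a^* \mid x, r_S)\, g_S .
\end{align}
By hypothesis, $r_S$ leads to the correct answer, so $\pi_\theta(a^*\mid x,r_S)>0$. Since $r_S$ is sampled with positive probability under a softmax policy, $\pi_\theta(r_S\mid x)>0$. So the scalar coefficient $c_S := \pi_\theta(r_S\mid x)\,\pi_\theta(a^*\mid x,r_S)$ is strictly positive. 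Taking the inner product with $g_S$ gives $c_S\|g_S\|^2$. This is strictly positive provided $g_S\neq 0$, which holds for a softmax parameterization whenever $\pi_\theta(r_S\mid x)<1$: the logit gradient of the log-probability is $e_{r_S}-\pi$, which is nonzero off the simplex vertices.

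\textbf{Step 2.} Account for the cross terms, namely the $\nabla_\theta\log\pi_\theta(a\mid x,r)$ component and the other rationales $r\neq r_S$. The main obstacle is that these can in principle have negative inner product with $g_S$, because shared parameters couple different rationales. I would handle this in two ways.

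\begin{itemize}
\item \emph{Answer-head term.} Under the standard modeling assumption (the tabular, or separately parameterized, rationale and answer heads implicit in the factorization), $\nabla_\theta \log \pi_\theta(a\mid x,r)$ is orthogonal to $g_S$.
\item \emph{Competing rationales.} In the tabular softmax over rationales, the full expected gradient with respect to the logit of $r_S$ equals $\pi_\theta(r_S\mid x)\big(\pi_\theta(a^*\mid x,r_S) - J(\theta)\big)$. This is the familiar advantage form. It is positive exactly when the shortcut's success rate exceeds the average success rate.
\end{itemize}

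\textbf{Step 3.} The indicator-weighted expectation as literally written in the theorem contains no baseline. It therefore reduces exactly to Step 1's term, which is a positive multiple of $g_S$. This proves the displayed inequality in the directional sense.

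I would state explicitly that the stronger "likelihood increases under the full update" conclusion needs the advantage condition from Step 2. That condition holds, for instance, when the shortcut is a high-success mode. Finally, I would note that Definition~\ref{def:shortcut} plays no role in the sign argument. The point is precisely that the gradient is blind to whether $a^*\not\perp\!\!\!\perp x\mid r_S$, which is the substance of the credit assignment failure.
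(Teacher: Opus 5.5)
Your argument is correct and rests on the same observation as the paper's proof. The outcome indicator multiplies the score $g_S=\nabla_\theta\log\pi_\theta(r_S\mid x)$ regardless of whether $a^*$ is actually derived from $r_S$. Your remark that Definition~\ref{def:shortcut} plays no role is exactly the paper's ``the optimizer cannot distinguish'' point.

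The paper, however, stays at a verbal level: reward broadcasting and additive coupling. It never says what ``$>0$'' means for a vector, never isolates the coefficient, and never says when the inequality can be strict. Your version supplies all three:
\begin{itemize}
\item the directional reading of ``$>0$'';
\item the explicit positive scalar multiplying $g_S$;
\item the nondegeneracy requirements $\pi_\theta(a^*\mid x,r_S)>0$ and $g_S\neq 0$. The strict inequality genuinely fails when $g_S=0$, e.g.\ if $r_S$ already has probability one in the tabular model.
\end{itemize}
Your Step~2 also adds something the paper lacks. It makes explicit that the displayed quantity is only the baseline-free, $r_S$-restricted piece of the update. So the gloss ``the policy gradient strictly increases the likelihood of $r_S$'' is not established by either argument for the full expected update, with its competing rationales, GRPO-style baselines and shared parameters. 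The paper's version buys only brevity and intuition.

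There are two small corrections. First, in Step~3 the expression as literally written has no $\mathbb{I}[r=r_S]$ factor. Its scalar coefficient is therefore $\pi_\theta(a^*\mid x,r_S)$ or $J(\theta)$, depending on what the expectation ranges over, rather than your Step~1 coefficient. It is still a positive multiple of $g_S$, so the conclusion stands, but it does not ``reduce exactly'' to Step~1.

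Second, and more substantively, your closing claim that likelihood growth under the full update ``needs'' $\pi_\theta(a^*\mid x,r_S)>J(\theta)$ conflates the logit of $r_S$ with its probability. In the tabular model, write $\pi_r=\pi_\theta(r\mid x)$ and $q_r=\pi_\theta(a^*\mid x,r)$. The first-order change of $\log\pi_{r_S}$ along $\nabla J$ is $\pi_{r_S}(q_{r_S}-J)-\sum_r\pi_r^2(q_r-J)$, which has the sign of $q_{r_S}-J$ only when there are exactly two rationales. Take three rationales $(r_S,r_1,r_2)$ with probabilities $(0.1,0.8,0.1)$:
\begin{itemize}
\item with $q=(0.85,0.9,0)$ you get $q_{r_S}>J=0.805$, yet the derivative is negative, because the better dominant rationale absorbs mass;
\item with $q=(0.45,0.4,1)$ you get $q_{r_S}<J$, yet the derivative is positive.
\end{itemize}
So the advantage condition characterizes the logit update only. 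For the likelihood of $r_S$ it is neither necessary nor sufficient.
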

\begin{proof}
The scalar reward $\mathcal{U}=1$ is broadcast to the entire sequence. Since $\nabla_\theta \log \pi_\theta(r_S \mid x)$ is additively coupled with the answer's gradient, the optimizer cannot distinguish whether the answer was derived \emph{via} $r_S$ or simply predicted \emph{after} $r_S$. Consequently, valid reasoning and hallucinatory shortcuts are indiscriminately reinforced, cementing the ``Right Answer, Wrong Reason'' behavior.
\end{proof}

\subsection{Causal Sufficiency as Gradient Debiasing}

To rectify this, we must ensure that the rationale $r$ is a \emph{sufficient statistic} for the answer $a$. We formalize this using the $do$-calculus intervention $do(R=r)$, which severs the dependence on the input $x$.

\begin{definition}[Causal Sufficiency]
A trajectory $(r, a)$ satisfies causal sufficiency if the answer $a$ is recoverable from $r$ alone under the current policy:
\begin{align}
    \phi_{\text{causal}}(r, a) = \mathbb{I} \left[ \arg\max_{\tilde{a}} \pi_{\bar{\theta}}(\tilde{a} \mid do(R=r)) = a \right],
\end{align}
where $\pi_{\bar{\theta}}$ represents the model in inference mode without access to $x$.
\end{definition}

\begin{theorem}[Gradient Debiasing]
\label{thm:debiasing}
Let $\mathcal{S}$ be the subspace of spurious shortcuts and $\mathcal{C}$ be the subspace of valid causal reasoning. Applying the filter $\phi_{\text{causal}}$ eliminates gradient contributions from $\mathcal{S}$:
\begin{align}
    \forall r_S \in \mathcal{S}, \quad \nabla_\theta J_{\text{CARE}}(\theta) \big|_{r=r_S} \to 0.
\end{align}
\end{theorem}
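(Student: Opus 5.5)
The approach is to make explicit the CARE objective that the statement implicitly refers to: the filtered policy gradient
\begin{align*}
    \nabla_\theta J_{\text{CARE}}(\theta) = \mathbb{E}_{(r,a)\sim\pi_\theta}\!\left[ \mathcal{U}(a,a^*)\,\phi_{\text{causal}}(r,a)\left( \nabla_\theta \log \pi_\theta(r \mid x) + \nabla_\theta \log \pi_\theta(a \mid x, r) \right) \right],
\end{align*}
that is, the estimator of equation~\ref{eq:standard_pg} multiplied pointwise by the admission indicator. The restriction ``$|_{r=r_S}$'' will be read as the contribution of the integrand over the event $\{r = r_S\}$, namely $\pi_\theta(r_S\mid x)\sum_a \pi_\theta(a\mid x,r_S)\,\mathcal{U}(a,a^*)\,\phi_{\text{causal}}(r_S,a)\,(\cdots)$. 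The claim then reduces to showing that the indicator factor vanishes, or becomes negligible, whenever $r_S\in\mathcal{S}$ and $a=a^*$. The case $a\neq a^*$ is already killed by $\mathcal{U}$.

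The key step links Definition~\ref{def:shortcut} to the causal-sufficiency filter. By definition, $r_S$ is insufficient for $a^*$: the answer still depends on $x$ given $r_S$, and the correct prediction comes only from $x$-dependent residual correlations. Under the intervention $do(R=r_S)$ the edge $x\to a$ is severed, so $\pi_{\bar\theta}(\cdot\mid do(R=r_S))$ sees only $r_S$. I would formalize ``insufficient'' as the condition that the $x$-marginalized predictive distribution $\pi_{\bar\theta}(\tilde a\mid r_S)$ does not place its argmax on $a^*$. Concretely, the mass assigned to $a^*$ in $\pi_\theta(a\mid x,r_S)$ is carried by the $x$-term, and the interventional distribution averages over $x$. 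Therefore $\phi_{\text{causal}}(r_S,a^*)=0$. Substituting this into the filtered gradient gives an exactly zero contribution from $r_S$ in both the rationale term and the answer term. For valid reasoning $r\in\mathcal{C}$, sufficiency means the argmax is preserved, so $\phi_{\text{causal}}=1$ and those gradients are untouched. This establishes the debiasing dichotomy.

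The ``$\to 0$'' rather than ``$=0$'' is handled in a final step. In practice, $\phi_{\text{causal}}$ is computed with the agreement-based self-verifier, sampled or frozen at $\bar\theta$, and a shortcut may pass verification with some probability $\epsilon(r_S)$. I would bound the residual contribution by $\epsilon(r_S)\,\pi_\theta(r_S\mid x)\,G$, where $G$ bounds the score-function norms (finite, for instance, under the bounded-likelihood regime enforced by Proximal Learnability). The contribution then tends to $0$ as the verifier's false-acceptance rate tends to $0$, for example under repeated agreement checks, with $\epsilon$ decaying geometrically in the number of votes.

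The main obstacle is the key step itself. Definition~\ref{def:shortcut} states insufficiency as a conditional dependence, $a^*\not\perp\!\!\!\perp x\mid r_S$, while the filter tests an argmax. Dependence alone does not force the argmax under the intervention to move away from $a^*$. I expect this will require adding an explicit assumption, namely that shortcuts are ``decision-insufficient'': $\arg\max_{\tilde a}\pi_{\bar\theta}(\tilde a\mid do(R=r_S))\neq a^*$. I would try first to justify this as the operational meaning of ``logically insufficient to deduce $a^*$'', and otherwise state it as the defining property of $\mathcal{S}$.
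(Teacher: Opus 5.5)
Your argument is essentially the paper's own: under $do(R=r_S)$ the query $x$ is masked, the shortcut rationale cannot reproduce $a^*$, so $\phi_{\text{causal}}=0$ and that trajectory's term in the (un-baselined, filtered) gradient is zeroed out. The ``decision-insufficient'' assumption you plan to add is exactly what the paper's proof uses without stating it, when it asserts that the drop from $\pi(a^*\mid x,r_S)$ to $\pi(a^*\mid r_S)$ makes the verification check fail, so stating it explicitly makes the same argument more rigorous rather than changing it.
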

\begin{proof}
For any $r_S \in \mathcal{S}$, the correctness of $a^*$ relies on the confounder $x$. Under the intervention $do(R=r_S)$, $x$ is masked. Since $r_S$ lacks logical sufficiency, the conditional probability $\pi(a^* \mid r_S)$ drops significantly compared to $\pi(a^* \mid x, r_S)$, causing the verification check to fail ($\phi_{\text{causal}}=0$). The gradient for this trajectory is zeroed out, effectively debiasing the update direction towards $\mathcal{C}$.
\end{proof}

\subsection{Proximal Learnability and Variance Control}

Finally, we address the optimization stability. The high variance of medical query difficulty leads to unstable gradients. Let $\mathcal{L}_{\text{seq}}(y) = -\frac{1}{|y|} \log \pi_\theta(y|x)$ be the length-normalized negative log-likelihood (NLL).

\begin{lemma}[Gradient Variance Bound]
\label{lem:variance}
The variance of the policy gradient estimator $\hat{g}$ is bounded by the second moment of the sequence NLL:
\begin{align}
    \text{Var}(\hat{g}) \le \mathcal{O} \left( \mathbb{E} \left[ \mathcal{L}_{\text{seq}}(y)^2 \right] \right).
\end{align}
\end{lemma}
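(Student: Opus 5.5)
The plan is to bound the variance by the raw second moment and then control the norm of the score function token by token. The estimator is $\hat g = \mathcal{U}(a,a^*)\,\nabla_\theta \log \pi_\theta(y\mid x)$ (the summed form in \eqref{eq:standard_pg}). First, $\text{Var}(\hat g) \le \mathbb{E}\|\hat g\|^2$, and since $\mathcal{U}\in\{0,1\}$ we get $\mathbb{E}\|\hat g\|^2 \le \mathbb{E}\|\nabla_\theta \log\pi_\theta(y\mid x)\|^2$. A group-relative baseline as in GRPO only shrinks this quantity, so it does not change the argument. The rest of the proof reduces the squared score norm to the squared length-normalized NLL.

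\textbf{Per-token bound.} Decompose $\log\pi_\theta(y\mid x)=\sum_{t=1}^{|y|}\log p_t$, where $p_t=\pi_\theta(y_t\mid x,y_{<t})$ comes from a softmax over logits $z_t(\theta)$. For a softmax,
$$\nabla_\theta \log p_t = J_t^\top(e_{y_t}-\mathrm{softmax}(z_t)),$$
where $J_t=\partial z_t/\partial\theta$. The vector $e_{y_t}-\mathrm{softmax}(z_t)$ has $\ell_1$-norm exactly $2(1-p_t)$. Assume the logit Jacobian is bounded: each column of $J_t^\top$ (the gradient of a single logit) has norm at most $B$. This is the standing regularity assumption hidden in the $\mathcal{O}(\cdot)$, and it holds, for instance, on a bounded parameter region with Lipschitz layers. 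Under it, $\|\nabla_\theta\log p_t\|\le 2B(1-p_t)$. Then apply the elementary inequality $1-p\le -\log p$ for $p\in(0,1]$ to get $\|\nabla_\theta\log p_t\|\le 2B(-\log p_t)$.

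\textbf{Summing over tokens.} By the triangle inequality,
$$\|\nabla_\theta\log\pi_\theta(y\mid x)\|\le 2B\sum_t(-\log p_t)=2B\,|y|\,\mathcal{L}_{\text{seq}}(y).$$
Squaring and taking expectations gives
$$\text{Var}(\hat g)\le 4B^2\,\mathbb{E}\big[|y|^2\mathcal{L}_{\text{seq}}(y)^2\big]\le 4B^2T_{\max}^2\,\mathbb{E}\big[\mathcal{L}_{\text{seq}}(y)^2\big].$$
Here $T_{\max}$ is the maximum generation length, which is finite in practice. The constant $4B^2T_{\max}^2$ is independent of the trajectory, which yields the claimed $\mathcal{O}(\mathbb{E}[\mathcal{L}_{\text{seq}}^2])$.

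\textbf{Expected obstacle.} The main difficulty is making the per-token step rigorous, because the statement is silent on regularity. Without a uniform bound $B$ on the logit Jacobian, the score norm is not controlled by likelihood at all. I would state this bound explicitly as an assumption rather than try to derive it. The length factor $T_{\max}^2$ is the second point of looseness. If a sharper, length-free constant is wanted, I would instead apply Cauchy--Schwarz in the form $(\sum_t -\log p_t)^2\le |y|\sum_t(\log p_t)^2$. This alternative only yields a bound in terms of the per-token NLL second moment, so I would keep the simpler $T_{\max}$ version to match the statement.
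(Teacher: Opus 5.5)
The paper gives no proof of this lemma: it is stated bare in Section~\ref{sec:theory}, Theorem~\ref{thm:stability} right after it is also unproved, and the appendix is empty. There is therefore no argument of the authors' to compare against, and your proposal supplies the missing one.

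Your proof is correct under the two hypotheses you make explicit: a uniform bound $B$ on the gradients of individual logits, and a generation cap $T_{\max}$. These are exactly what the paper's bare $\mathcal{O}(\cdot)$ hides. Each step checks out:
\begin{itemize}
\item $\text{Var}(\hat g)\le\mathbb{E}\|\hat g\|^2$ for the trace-of-covariance variance;
\item $\|e_{y_t}-\mathrm{softmax}(z_t)\|_1=2(1-p_t)$;
\item $1-p\le-\log p$;
\item the triangle inequality over tokens.
\end{itemize}

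Your route gives more than the lemma as stated: a pointwise estimate $\|\nabla_\theta\log\pi_\theta(y\mid x)\|\le 2B\,|y|\,\mathcal{L}_{\text{seq}}(y)$ that holds for every trajectory. Conditioning on $y\in\mathcal{W}$ then yields part~1 of Theorem~\ref{thm:stability} immediately, with $C=4B^2T_{\max}^2$. The transfer to the filtered distribution is transparent in your form, whereas an expectation-level statement leaves it implicit.

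The $T_{\max}^2$ factor comes only from using the unnormalized sequence score in equation~(\ref{eq:standard_pg}). With the per-token-averaged score $\frac{1}{|y|}\nabla_\theta\log\pi_\theta(y\mid x)$ of the original GRPO objective, the same argument gives $2B\,\mathcal{L}_{\text{seq}}(y)$ with no length factor. That version matches the length-normalized NLL in the statement more naturally.

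One side remark is false and should be corrected: a group-relative baseline does not ``only shrink'' the second moment. Suppose rewards are binary and standardized, and $k$ of the $N$ rollouts pass. A passing rollout then gets $A^{(i)}=\sqrt{(N-k)/k}$, which equals $\sqrt{N-1}>1$ when $k=1$. Failing rollouts also receive nonzero weight, so their score norms, which may be large, now enter the estimator. The repair costs nothing, because your per-trajectory bound does not depend on the reward. Since $|A^{(i)}|\le\sqrt{N-1}$, the GRPO estimator obeys the same bound with the constant multiplied by $N-1$, which the $\mathcal{O}(\cdot)$ still absorbs.
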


\begin{theorem}[Variance-Bounded Optimization]
\label{thm:stability}
By restricting training to a dynamic window $\mathcal{W} = [\tau_{\text{low}}, \tau_{\text{high}}]$ of NLLs (Proximal Learnability), we guarantee:
\begin{enumerate}
    \item \textbf{Bounded Gradient Variance:} $\text{Var}(\hat{g} | y \in \mathcal{W}) \le C \cdot \tau_{\text{high}}^2$, preventing explosion from hallucinations.
    \item \textbf{Non-Vanishing Signal:} $||\mathbb{E}[\hat{g} | y \in \mathcal{W}]|| \ge \epsilon > 0$, ensuring informative updates by rejecting trivial samples.
\end{enumerate}
\end{theorem}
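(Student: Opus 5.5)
The plan is to prove the two claims separately, both by conditioning Lemma~\ref{lem:variance} and the score-function form of the estimator on the event $\{y\in\mathcal{W}\}$, i.e.\ $\tau_{\text{low}}\le \mathcal{L}_{\text{seq}}(y)\le \tau_{\text{high}}$. Throughout I will write the per-sample estimator as $\hat g = A(y)\,\nabla_\theta \log\pi_\theta(y\mid x)$, where $A$ is the (group-relative) advantage with $|A|\le 1$ under the binary reward. I will also assume the per-token score is uniformly bounded, $\|\nabla_\theta \log\pi_\theta(y_t\mid x,y_{<t})\|\le G$. This is a standard boundedness assumption that I would state explicitly; it is what makes the constant in the $\mathcal{O}(\cdot)$ of Lemma~\ref{lem:variance} uniform.

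\textbf{Part 1 (bounded variance).} I would apply Lemma~\ref{lem:variance} to the conditional law of $y$ given $y\in\mathcal{W}$. The conditioning only reweights the sampling distribution, so the lemma's constant $C$ carries over unchanged, giving $\text{Var}(\hat g\mid y\in\mathcal{W})\le C\,\mathbb{E}[\mathcal{L}_{\text{seq}}(y)^2\mid y\in\mathcal{W}]$. On $\mathcal{W}$ we have $0\le \mathcal{L}_{\text{seq}}\le\tau_{\text{high}}$, so the conditional second moment is at most $\tau_{\text{high}}^2$, and the claim follows at once. The only care needed is that the lemma's proof must not use properties of the unconditional distribution, such as $\mathbb{E}[\nabla\log\pi]=0$. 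If it does, I would fall back on the cruder bound $\text{Var}\le\mathbb{E}\|\hat g\|^2$, which requires no centering.

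\textbf{Part 2 (non-vanishing signal).} The lower cutoff $\tau_{\text{low}}$ rules out near-deterministic samples. If $\mathcal{L}_{\text{seq}}(y)\ge\tau_{\text{low}}$, then $\pi_\theta(y\mid x)\le e^{-|y|\tau_{\text{low}}}$, so the policy is not saturated on $y$. Under a softmax parametrization, the score $\nabla\log\pi_\theta(y_t\mid\cdot)=e_{y_t}-p_t$ has norm bounded below by a function of $1-p_t(y_t)$. Because the average token log-probability is at most $-\tau_{\text{low}}$, at least a constant fraction of tokens satisfy $1-p_t(y_t)\ge 1-e^{-\tau_{\text{low}}/2}$. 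I would then argue that within a group the advantages are nonzero exactly when rewards are mixed, which proximal samples are more likely to produce. Conditioning on such groups and bounding the inner product of the expected update with a fixed direction, for instance the logit direction of the correct answer, from below should yield $\epsilon>0$ depending on $\tau_{\text{low}}$ and the group mixing probability.

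\textbf{Main obstacle.} I expect Part 2 to be the hardest step. A lower bound on individual score norms does not bound the norm of their \emph{expectation} from below, since scores from different samples can cancel, and indeed $\mathbb{E}_{\pi}[\nabla\log\pi]=0$ unconditionally. Two things must be combined: the window must break this cancellation, and the advantage must be correlated with the score. I would therefore state an explicit non-degeneracy assumption: the conditional covariance of $A$ and $\nabla\log\pi$ on $\mathcal{W}$ is bounded below, i.e.\ the reward is not independent of the trajectory within the window. I would then derive $\epsilon$ from this covariance, being clear that Part 2 holds under this assumption rather than unconditionally.
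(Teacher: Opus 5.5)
The paper gives no proof of this theorem, nor of Lemma~\ref{lem:variance}. The only argument it implies is your Part~1 reduction (the lemma plus $\mathcal{L}_{\text{seq}}\le\tau_{\text{high}}$ on $\mathcal{W}$); Part~2 is simply asserted. Your Part~1 follows that route but leaves its key step open. Transferring the lemma's constant to the conditional law needs, in effect, a pointwise bound $\|\hat g(y)\|\le c\,\mathcal{L}_{\text{seq}}(y)$, and so does your fallback $\mathrm{Var}\le\mathbb{E}\|\hat g\|^2$. Your assumption $\|\nabla_\theta\log\pi_\theta(y_t\mid x,y_{<t})\|\le G$ does not supply this. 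It gives $\|\hat g\|\le A_{\max}|y|G$ regardless of the NLL, so $\tau_{\text{high}}^2$ could only appear through a $C$ that itself depends on $\tau_{\text{high}}$. The missing link is the softmax structure: with logits $z_t$, $p_t=\mathrm{softmax}(z_t)$ and one-hot $e_{y_t}$, the logit-score is $e_{y_t}-p_t$ and
\[
\|e_{y_t}-p_t\|^2=\bigl(1-p_t(y_t)\bigr)^2+\textstyle\sum_{v\neq y_t}p_t(v)^2\le 2\bigl(1-p_t(y_t)\bigr)^2\le 2\bigl(\log p_t(y_t)\bigr)^2,
\]
using $1-p\le-\log p$. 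With $\|\partial z_t/\partial\theta\|\le J$ and $|y|\le T_{\max}$, this gives $\|\hat g\|\le\sqrt{2}\,J A_{\max}|y|\,\mathcal{L}_{\text{seq}}(y)$. Hence $\mathrm{Var}(\hat g\mid y\in\mathcal{W})\le 2J^2A_{\max}^2T_{\max}^2\,\tau_{\text{high}}^2$, which proves the lemma and Part~1 at once; no length cap is needed if the loss is token-averaged, as in GRPO. Separately, $|A|\le1$ is false for group-standardized binary rewards: a lone success among $N$ rollouts receives $A\approx\sqrt{N-1}$. This is harmless, since it is still a constant.

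Part~2 is the real gap, and no argument from the window can close it, because the claim is false without further hypotheses. If every rollout in a group gets the same reward, the standardized advantage is $0$. So on a prompt with constant reward, $\hat g\equiv 0$ on $\mathcal{W}$ for every $\tau_{\text{low}}$. More generally, a uniform $\epsilon>0$ would forbid the expected update from ever vanishing, i.e.\ forbid convergence. Your non-degeneracy assumption does not repair this. For fixed $\mathcal{W}$ one has $\mathbb{E}[\nabla_\theta\log\pi_\theta\mid\mathcal{W}]=\nabla_\theta\log P_\theta(\mathcal{W})$, so $\mathbb{E}[\hat g\mid\mathcal{W}]=\mathrm{Cov}(A,\nabla_\theta\log\pi_\theta\mid\mathcal{W})+\mathbb{E}[A\mid\mathcal{W}]\,\nabla_\theta\log P_\theta(\mathcal{W})$. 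Conditioning destroys the zero mean of both the advantage and the score, so the second term can cancel the first; to get $\epsilon$ you would have to assume the conclusion itself. The token-counting step is also wrong. An average token NLL of at least $\tau_{\text{low}}$ does not place a constant fraction of tokens above $\tau_{\text{low}}/2$, since a single token can carry the whole NLL. All you get is one token with $p_t(y_t)\le e^{-\tau_{\text{low}}}$, i.e.\ a non-saturated logit-level score for that sample. As you observe yourself, this says nothing about $\|\mathbb{E}\hat g\|$. The honest formulation states the non-vanishing signal as an assumption rather than a consequence of the window.
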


Theorem \ref{thm:stability} proves that our adaptive filtering does not merely curate data quality but mathematically constrains the optimization trajectory within a stable, low-variance region, enabling efficient self-improvement.

\section{Methodology}
\label{sec:method}

\begin{figure*}[t]
    \centering
    \includegraphics[width=1\linewidth]{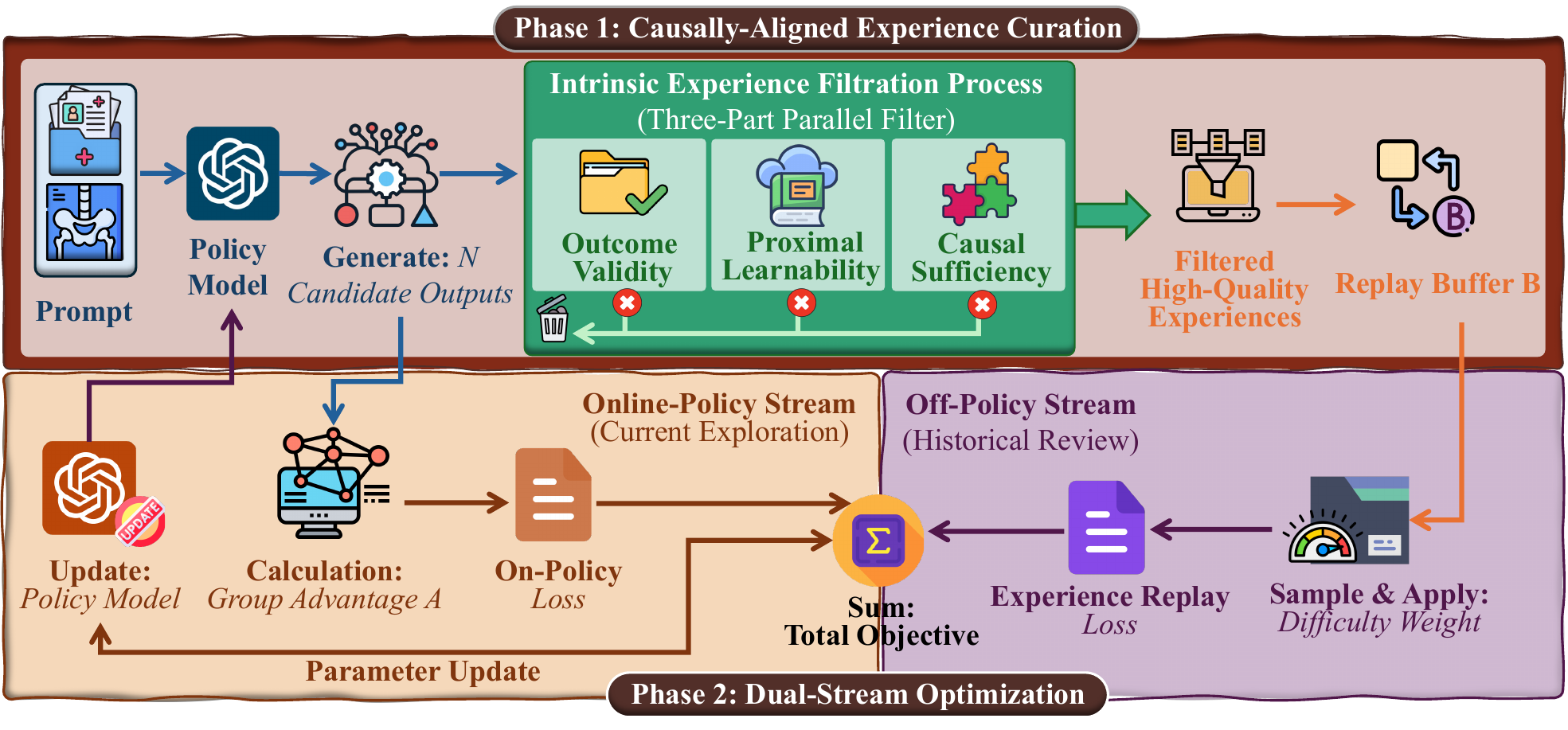}
    \caption{\small \textbf{Overview of the proposed CARE framework.} The process consists of two phases: (1) \textbf{Causally-Aligned Experience Curation}, where candidate rollouts are evaluated through a three-part filter (Outcome Validity, Proximal Learnability, and Causal Sufficiency) to construct a high-fidelity learning signal; and (2) \textbf{Dual-Stream Optimization}, which robustly updates the policy using both on-policy exploration advantages and difficulty-weighted replay.}
    \label{fig:model}
\end{figure*}

We present Causally-Aligned Reasoning Exploration (CARE), a self-improvement framework designed to align policy optimization with valid clinical reasoning. 
As illustrated in Figure \ref{fig:model}, CARE translates the theoretical principles derived in Section \ref{sec:theory} into a practical training algorithm. 
We shift the training paradigm from naive outcome supervision to intrinsic experience curation. 
We first detail our structured admission mechanism, which dynamically filters trajectories based on Proximal Learnability (variance control) and Causal Sufficiency (gradient debiasing). 
Finally, we describe the dual-stream optimization objective that leverages these curated experiences to stably improve the policy.

\subsection{Problem Formulation}
We consider a multimodal medical policy $\pi_\theta$ parameterized by $\theta$, which processes a clinical query $x = (x_{\text{img}}, x_{\text{txt}})$ to generate a response $y$. 
To enable interpretability, we enforce a structured chain-of-thought format $y = (r, a)$, where $r$ is the generated rationale and $a$ is the final diagnostic answer. 
The training set $\mathcal{D}$ contains query-answer pairs $(x, a^*)$, but lacks expert-annotated rationales.

Our objective is to maximize the expected return $J(\theta) = \mathbb{E}_{x \sim \mathcal{D}, y \sim \pi_\theta} [R(x, y)]$. 
As proved in Theorem \ref{thm:failure}, relying solely on a binary outcome reward $R(x, y) = \mathbb{I}[a = a^*]$ induces a gradient leak that reinforces spurious shortcuts. 
CARE addresses this by collecting $N$ rollouts $\mathcal{Y}_x = \{y^{(i)}\}_{i=1}^N$ per prompt and rigorously curating them to construct an augmented reward signal and a high-fidelity replay buffer $\mathcal{B}$.

\subsection{Causally-Aligned Experience Curation}
\label{sec:curation}

To mitigate the reinforcement of spurious correlations and stabilize optimization, we introduce a unified admission function $\Phi(x, y): \mathcal{X} \times \mathcal{Y} \to \{0, 1\}$. 
A trajectory is admitted if and only if it satisfies three hierarchical criteria.

\paragraph{Outcome Validity.}
The prerequisite for any valid experience is that the final decision aligns with the ground truth. 
Let $\mathcal{M}(\cdot)$ be a rule-based answer extraction function:
\begin{align}
    \phi_{\text{valid}}(x, y) = \mathbb{I}\left[ \mathcal{M}(y) = a^* \right].
\end{align}

\paragraph{Proximal Learnability (Variance Control).}
As theoretically established in Lemma \ref{lem:variance}, unbounded sequence negative log-likelihood (NLL) directly leads to gradient variance explosion. 
To ensure optimization strictly occurs within the model's \emph{zone of proximal development} (Theorem \ref{thm:stability}), we quantify trajectory uncertainty using the length-normalized NLL:
\begin{align}
    \mathcal{L}_{\text{seq}}(y|x; \theta) = -\frac{1}{|y|} \sum_{t=1}^{|y|} \log \pi_\theta(y_t \mid x, y_{<t}).
\end{align}
We maintain a moving history buffer $\mathcal{H}$ of recent NLL values to dynamically compute a learnability window $[\tau_{\text{low}}, \tau_{\text{high}}]$ based on its quantiles. The learnability filter is:
\begin{align}
    \phi_{\text{learn}}(y) = \mathbb{I}\left[ \tau_{\text{low}} \le \mathcal{L}_{\text{seq}}(y|x; \theta) \le \tau_{\text{high}} \right].
\end{align}
This effectively rejects trivial shortcuts (vanishing gradients) and chaotic hallucinations (exploding variance), maximizing the stable Fisher information gain per update.

\paragraph{Causal Sufficiency via Agreement Verification.}
To explicitly debias the gradient (Theorem \ref{thm:debiasing}), we must ensure the decision $a$ stems causally from the reasoning $r$. 
We empirically instantiate the $do(R=r)$ intervention using Agreement-Based Self-Verification. 
For a generated pair $(r, a)$, we construct a verification prompt $\mathcal{T}(r)$ (e.g., \emph{``Based solely on the reasoning: $r$, what is the diagnosis?''}) that masks the original query $x$. We then query the model in inference mode ($\pi_{\bar{\theta}}$) to predict a new answer $\hat{a}$:
\begin{align}
    \hat{a} = \operatorname*{argmax}_{\tilde{a}} \pi_{\bar{\theta}}(\tilde{a} \mid \mathcal{T}(r)).
\end{align}
The trajectory is causally sufficient only if the rationale is robust enough to independently reproduce the answer:
\begin{align}
    \phi_{\text{causal}}(r, a) = \mathbb{I}\left[ \hat{a} = a \right].
\end{align}
By penalizing trajectories where $\hat{a} \neq a$, this mechanism guarantees that $r$ is a sufficient statistic for $a$, severing the dependence on dataset artifacts.

\paragraph{Unified Admission.}
The final CARE admission function serves as both the augmented RL reward and the gatekeeper for the replay buffer $\mathcal{B}$:
\begin{align}
    \Phi(x, y) = \phi_{\text{valid}}(x, y) \cdot \phi_{\text{learn}}(y) \cdot \phi_{\text{causal}}(r, a).
\end{align}

\subsection{Dual-Stream Optimization}

We optimize $\pi_\theta$ using a hybrid objective that combines on-policy Group Relative Policy Optimization (GRPO) with difficulty-weighted experience replay.

\paragraph{On-Policy Group Relative Exploration.}
For each prompt $x$, we generate a group of $N$ candidate rollouts $\mathcal{Y}_x$. 
Instead of relying on a separate value network, we compute the advantage $A^{(i)}$ for the $i$-th rollout using the unified admission score $\Phi^{(i)} = \Phi(x, y^{(i)})$, standardized within the group:
\begin{align}
    A^{(i)} = \frac{\Phi^{(i)} - \mu(\Phi)}{\sigma(\Phi) + \epsilon},
\end{align}
where $\mu(\Phi)$ and $\sigma(\Phi)$ are the mean and standard deviation of the admission scores in $\mathcal{Y}_x$. 
The on-policy objective is:
\begin{align}
    \mathcal{L}_{\text{on}}(\theta) = - \mathbb{E}_{x \sim \mathcal{D}} \left[ \frac{1}{N} \sum_{i=1}^N \left( A^{(i)} \log \pi_\theta(y^{(i)}|x) - \beta \mathbb{D}_{\text{KL}}[\pi_\theta || \pi_{\text{ref}}] \right) \right],
\end{align}
where the KL divergence penalty prevents excessive deviation from the reference policy $\pi_{\text{ref}}$.

\paragraph{Difficulty-Weighted Experience Replay.}
Simultaneously, we sample high-quality trajectories $(x, y) \sim \mathcal{B}$ that have successfully passed the curation pipeline. 
To further prioritize samples at the frontier of the model's capability, we apply an importance weight $w(y) \propto \mathcal{L}_{\text{seq}}(y|x; \theta)$ (normalized within the batch). The replay loss is:
\begin{align}
    \mathcal{L}_{\text{rep}}(\theta) = - \mathbb{E}_{(x, y) \sim \mathcal{B}} \left[ w(y) \log \pi_\theta(y|x) \right].
\end{align}

\paragraph{Total Objective.}
The final training objective seamlessly integrates exploration and stable exploitation:
\begin{align}
    \mathcal{L}_{\text{CARE}}(\theta) = \mathcal{L}_{\text{on}}(\theta) + \lambda \mathcal{L}_{\text{rep}}(\theta),
\end{align}
where $\lambda$ controls the replay mixing ratio.

\subsection{Training Algorithm}
The CARE training procedure is summarized in Algorithm \ref{alg:care}. 
The framework alternates between generating rollout groups, filtering them for causal sufficiency and learnability, and updating the policy via the dual-stream objective.

\begin{algorithm}[ht]
\caption{\small Causally-Aligned Reasoning Exploration (CARE)}
\label{alg:care}
\begin{algorithmic}[1]
   \STATE {\bfseries Input:} Dataset $\mathcal{D}$, Policy $\pi_\theta$, Reference $\pi_{\text{ref}}$, Buffer $\mathcal{B} \leftarrow \emptyset$, History $\mathcal{H} \leftarrow \emptyset$.
   \WHILE{not converged}
       \STATE Sample a batch of prompts $\{x_j\}$ from $\mathcal{D}$.
       \STATE Update dynamic window $[\tau_{\text{low}}, \tau_{\text{high}}]$ using quantiles of $\mathcal{H}$.
       \FOR{each prompt $x_j$}
           \STATE Generate $N$ rollouts $\{y_j^{(i)}\}_{i=1}^N \sim \pi_\theta(\cdot|x_j)$.
           \FOR{$i=1$ {\bfseries to} $N$}
               \STATE Compute NLL $\ell_j^{(i)} \leftarrow \mathcal{L}_{\text{seq}}(y_j^{(i)})$. Update $\mathcal{H}$.
               \STATE \textcolor{gray}{// Check Validity and Learnability}
               \IF{$\phi_{\text{valid}}(y_j^{(i)}) \land \phi_{\text{learn}}(y_j^{(i)})$}
                   \STATE \textcolor{gray}{// Check Causal Sufficiency via self-verification}
                   \STATE $\hat{a} \leftarrow \operatorname{argmax} \pi_{\bar{\theta}}(a|\mathcal{T}(r_j^{(i)}))$.
                   \IF{$\hat{a} == a_j^{(i)}$}
                       \STATE $\Phi_j^{(i)} \leftarrow 1$, Add $(x_j, y_j^{(i)})$ to $\mathcal{B}$.
                   \ELSE
                       \STATE $\Phi_j^{(i)} \leftarrow 0$.
                   \ENDIF
               \ELSE
                   \STATE $\Phi_j^{(i)} \leftarrow 0$.
               \ENDIF
           \ENDFOR
           \STATE Compute advantages $A_j^{(i)}$ using $\{\Phi_j^{(i)}\}_{i=1}^N$.
       \ENDFOR
       \STATE Sample replay batch $\mathcal{B}_{\text{batch}} \sim \mathcal{B}$.
       \STATE Update $\theta$ by minimizing $\mathcal{L}_{\text{CARE}} = \mathcal{L}_{\text{on}} + \lambda \mathcal{L}_{\text{rep}}$.
   \ENDWHILE
\end{algorithmic}
\end{algorithm}

\section{Experiments}

\subsection{Experimental Settings}
\paragraph{Implementation Details.}
Our model, CARE, is built upon the HULU 7B medical vision-language backbone \citep{hulu-med}. To isolate the effect of CARE from the backbone architecture, we additionally report CARE (HuatuoGPT-V), which applies CARE to the HuatuoGPT-V 7B medical VLM \citep{huatuogptvision}. Training is conducted on a mixture of medical datasets, including PMC-VQA \citep{PMC-VQA}, SLAKE \citep{SLAKE}, PathVQA \citep{PathVQA}, MedMCQA \citep{medmcqa}, and PubMedQA \citep{pubmedqa}. These datasets cover a broad spectrum of clinical imaging and professional knowledge reasoning tasks. We strictly ensure that there is no overlap between the training samples and the test sets of our evaluation benchmarks to prevent data contamination.

CARE is implemented using a rollout-based reinforcement learning framework on $8 \times$ A100 GPUs with a rollout batch size of 128 and an update batch size of 64. For each prompt, we generate $N=8$ on-policy rollouts. The dynamic learnability window is governed by quantile thresholds $\alpha=0.2$ and $\beta=0.9$, calculated using a FIFO-based moving history buffer $\mathcal{H}$ of 2,000 NLL values. We set the replay loss weight $\lambda=1.0$, the KL divergence penalty $\beta=0.04$, and a fixed 50\% replay ratio. The importance weight $w(y)$ is Softmax-normalized within each replay batch based on length-normalized NLLs. To ensure stable initialization, experience-based optimization and the replay buffer are activated only after the batch Pass@1 accuracy reaches 35\%.

\paragraph{Benchmarks and Evaluation.}
We evaluate CARE across a comprehensive suite of clinical imaging and reasoning tasks, including multimodal benchmarks (OmniVQA \citep{omnivqa}, PMC-VQA \citep{PMC-VQA}, VQA-RAD \citep{vqa-rad}, MMMU-Med \citep{mmmu}, PathVQA \citep{PathVQA}, MedXQA \citep{medxqa}, and SLAKE \citep{SLAKE}) and professional text benchmarks (MMLU-Pro-Med \citep{mmlupro}, PubMedQA \citep{pubmedqa}, MedMCQA \citep{medmcqa}, MedQA \citep{medqa}, and MMLU-Med \citep{mmlu}). To ensure reproducibility, all evaluation protocols, scoring scripts, and metrics strictly follow the official Hulu-Med GitHub repository.

\subsection{Main Results}
\begin{table*}[t]\small
\centering
\setlength{\tabcolsep}{3pt}
\caption{\small Performance comparison on medical multimodal benchmarks across proprietary, general-purpose, and medical VLMs.}
\label{tab:multimodal_main}
\resizebox{\linewidth}{!}{
\begin{tabular}{lccccccc}
\toprule
Models & OmniVQA & PMC-VQA & VQA-RAD & SLAKE & PathVQA & MedXQA & MMMU-Med \\
\midrule
\multicolumn{8}{l}{\textbf{Proprietary Models}} \\
GPT-4.1 \citep{gpt4.1} & 75.5 & 55.2 & 65.0 & 72.2 & 55.5 & 45.2 & 75.2 \\
GPT-4o \citep{gpt4o} & 67.5 & 49.7 & 61.0 & 71.2 & 55.5 & 44.3 & 62.8 \\
Claude Sonnet 4 \citep{claude4} & 65.5 & 54.4 & 67.6 & 70.6 & 54.2 & 43.3 & 74.6 \\
Gemini-2.5-Flash \citep{gemini2.5} & 71.0 & 55.4 & 68.5 & 75.8 & 55.4 & 52.8 & 76.9 \\
\midrule
\multicolumn{8}{l}{\textbf{General-purpose MM}} \\
Qwen2.5VL-7B \citep{qwen2.5vl} & 63.6 & 51.9 & 63.2 & 66.8 & 44.1 & 20.1 & 50.6 \\
Janus-Pro-7B \citep{januspro} & 59.6 & 50.1 & 49.7 & 55.2 & 35.4 & 18.4 & 36.1 \\
InternVL2.5-8B \citep{intervl2.5} & 81.3 & 51.3 & 59.4 & 69.0 & 42.1 & 21.7 & 53.5 \\
InternVL3-8B \citep{internvl3} & 79.1 & 53.8 & 65.4 & 72.8 & 48.6 & 22.4 & 59.2 \\
\midrule
\multicolumn{8}{l}{\textbf{Medical MM}} \\
BiomedGPT \citep{biomedgpt} & 27.9 & 27.6 & 16.6 & 13.6 & 11.3 & - & 24.9 \\
Med-R1-2B \citep{med-r1} & - & 47.4 & 39.0 & 54.5 & 15.3 & 21.1 & 34.8 \\
MedVLM-R1-2B \citep{2025medvlm-r1} & 77.6 & 48.8 & 49.2 & 56.3 & 36.0 & 21.4 & 35.2 \\
HealthGPT-M3 \citep{healthGPT} & 71.5 & 55.4 & 56.8 & 70.8 & 55.4 & 22.4 & 42.8 \\
BioMedX2-8B \citep{bimedix2} & 66.0 & 41.8 & 55.7 & 54.1 & 34.6 & 21.9 & 39.8 \\
LLaVA-Med-7B \citep{llavamed} & 34.8 & 22.7 & 46.6 & 51.9 & 35.2 & 20.8 & 28.1 \\
MedGemma-4B-IT \citep{medgemma} & 70.7 & 49.2 & 72.3 & 78.2 & 48.1 & 25.4 & 43.2 \\
HuatuoGPT-V 7B \citep{huatuogptvision} & 74.3 & 53.1 & 67.6 & 68.1 & 44.8 & 23.2 & 49.8 \\
Lingshu-7B \citep{lingshu} & 82.9 & 56.3 & 67.9 & 83.1 & 61.9 & 26.7 & - \\
Hulu-Med-7B \citep{hulu-med} & 84.2 & 66.8 & 78.0 & 86.8 & 65.6 & 29.0 & 51.4 \\
\rowcolor{gray!15} CARE-7B (HuatuoGPT-V) & 75.8 & 54.6 & 69.0 & 69.8 & 46.3 & 25.4 & 51.2 \\
\rowcolor{gray!15} CARE-7B & \textbf{85.6} & \textbf{68.2} & \textbf{79.3} & \textbf{88.1} & \textbf{67.1} & \textbf{31.2} & \textbf{53.0} \\
\bottomrule
\end{tabular}}
\end{table*}

\begin{table*}[t]\small
\centering
\setlength{\tabcolsep}{3pt}
\caption{\small Performance comparison on medical text benchmarks across proprietary, general-purpose, and medical models.}
\label{tab:text_main}
\resizebox{\linewidth}{!}{
\begin{tabular}{lcccccc}
\toprule
Models & MMLU-Pro-Med & MedXQA & PubMedQA & MedMCQA & MedQA & MMLU-Med \\
\midrule
\multicolumn{7}{l}{\textbf{Proprietary Models}} \\
GPT-4.1 \citep{gpt4.1} & 78.0 & 30.9 & 75.6 & 77.7 & 89.1 & 89.6 \\
o3-mini \citep{openai2025o3mini} & 78.1 & 35.4 & 73.6 & 60.6 & 74.5 & 87.0 \\
GPT-4o \citep{gpt4o} & 75.6 & 25.9 & 71.8 & 76.9 & 89.2 & 88.2 \\
Claude Sonnet 4 \citep{claude4} & 79.5 & 33.6 & 78.6 & 79.3 & 92.1 & 91.3 \\
Gemini-2.5-Flash \citep{gemini2.5} & 70.0 & 35.6 & 73.8 & 73.6 & 91.2 & 84.2 \\
Deepseek-V3 \citep{deepseekv3} & 74.6 & 20.0 & 77.7 & 88.0 & 51.0 & 86.5 \\
\midrule
\multicolumn{7}{l}{\textbf{General-purpose MM}} \\
Qwen2.5VL-7B \citep{qwen2.5vl} & 50.5 & 12.8 & 76.4 & 52.6 & 57.3 & 73.4 \\
Janus-Pro-7B \citep{januspro} & 20.2 & 10.0 & 72.0 & 37.5 & 37.4 & 46.4 \\
InternVL2.5-8B \citep{intervl2.5} & 50.6 & 11.6 & 76.4 & 52.4 & 53.7 & 74.2 \\
InternVL3-8B \citep{internvl3} & 57.9 & 13.1 & 75.4 & 57.7 & 62.1 & 77.5 \\
\midrule
\multicolumn{7}{l}{\textbf{Medical MM}} \\
MedVLM-R1-2B \citep{2025medvlm-r1} & 24.9 & 11.8 & 66.4 & 39.7 & 42.3 & 51.8 \\
BioMedX2-8B \citep{bimedix2} & 40.8 & 13.4 & 75.2 & 52.9 & 58.9 & 68.6 \\
MedGemma-4B-IT \citep{medgemma} & 38.6 & 12.8 & 72.2 & 52.2 & 56.2 & 66.7 \\
HealthGPT-M3 \citep{healthGPT} & 38.3 & 11.5 & 57.8 & 54.2 & 55.0 & 72.5 \\
LLaVA-Med-7B \citep{llavamed} & 16.6 & 9.9 & 26.4 & 39.4 & 42.0 & 50.6 \\
HuatuoGPT-V 7B \citep{huatuogptvision} & 44.6 & 10.1 & 72.8 & 51.2 & 52.9 & 69.3 \\
Lingshu-7B \citep{lingshu} & 50.4 & 16.5 & 76.6 & 55.9 & 63.3 & 74.5 \\
Hulu-Med-7B \citep{hulu-med} & 60.6 & 19.6 & 77.4 & 67.6 & 73.5 & 79.5 \\
\rowcolor{gray!15}CARE-7B (HuatuoGPT-V) & 46.3 & 12.2 & 73.9 & 53.0 & 54.6 & 71.0 \\
\rowcolor{gray!15}CARE-7B & \textbf{62.4} & \textbf{22.1} & \textbf{78.6} & \textbf{69.1} & \textbf{75.0} & \textbf{81.1} \\
\bottomrule
\end{tabular}}
\end{table*}

\paragraph{Medical Multimodal Benchmarks.}
Table \ref{tab:multimodal_main} presents a comprehensive performance comparison across proprietary, general-purpose, and medical Vision-Language Models. 
CARE achieves the strongest overall performance among medical VLMs, consistently outperforming prior domain-specific models across all evaluated benchmarks. 
Notably, CARE significantly surpasses its backbone, Hulu-Med-7B, on all multimodal datasets. This gap demonstrates that our causally-aligned experience curation and dual-stream optimization provide substantial reasoning gains that go beyond mere architectural improvements. 
Furthermore, applying the CARE framework to HuatuoGPT-V leads to consistent performance boosts, confirming that CARE is backbone-agnostic and can effectively enhance the reasoning capabilities of existing medical VLMs. 
While large proprietary models remain competitive on certain benchmarks, CARE narrows the gap considerably.

\paragraph{Medical Text Benchmarks.}
We further evaluate the effectiveness of CARE on medical text-only reasoning benchmarks, as shown in Table \ref{tab:text_main}. 
CARE consistently outperforms strong medical baselines, including HuatuoGPT-V and Hulu-Med, across all evaluated tasks. 
These results indicate that CARE enhances not only multimodal perception but also general medical knowledge deduction. 
Specifically, the performance gains on complex benchmarks like MMLU-Med and MedQA suggest that enforcing causal sufficiency and proximal learnability effectively mitigates shortcut learning, thereby enhancing long-horizon reasoning and decision consistency in purely textual clinical settings.

\subsection{Ablation Studies and Analysis}
\label{sec:analysis}

\paragraph{Methodological Baselines and Ablation Study.}
\label{sec:ablation}
\begin{table}[t]\small
\centering
\caption{\small \textbf{Ablation Study and Baseline Comparison.} Evaluation of different training paradigms starting from the Hulu-Med-7B base model.}
\label{tab:ablation}
\begin{tabular}{lcccc}
\toprule
\textbf{Method} & \textbf{PMC-VQA} & \textbf{MedQA} & \textbf{MMMU-Med} & \textbf{Avg.} \\
\midrule
\multicolumn{5}{l}{\textit{Methodological Baselines}} \\
Hulu-Med-7B (Base / SFT) & 66.8 & 73.5 & 51.4 & 63.9 \\
+ Standard GRPO (Outcome-only) & 67.2 & 73.8 & 51.8 & 64.3 \\
\midrule
\multicolumn{5}{l}{\textit{CARE Ablations (Starting from Base)}} \\
CARE w/o Causal Sufficiency ($\phi_{\text{causal}}$) & 67.5 & 74.2 & 52.1 & 64.6 \\
CARE w/o Proximal Learnability ($\phi_{\text{learn}}$)& 67.8 & 74.4 & 52.4 & 64.9 \\
CARE w/o Difficulty Replay ($\lambda=0$) & 68.0 & 74.7 & 52.6 & 65.1 \\
\midrule
\rowcolor{gray!15}\textbf{CARE (Ours)} & \textbf{68.2} & \textbf{75.0} & \textbf{53.0} & \textbf{65.4} \\
\bottomrule
\end{tabular}
\end{table}
We first compare CARE against the standard RL baseline (Standard GRPO) and dissect our framework by systematically removing its core components. Table \ref{tab:ablation} presents the results on three representative benchmarks: PMC-VQA (multimodal), MedQA (textual), and MMMU-Med (complex reasoning). The Hulu-Med-7B model serves as our baseline.
As shown in Table \ref{tab:ablation}, applying standard outcome-based GRPO provides only marginal gains (+0.4\% average) over the strong SFT baseline, suffering from noisy reward signals. Among our proposed components, removing Causal Sufficiency ($\phi_{\text{causal}}$) results in the most noticeable performance drop compared to the full model, confirming that preventing shortcut learning is crucial for high-level reasoning. Removing Proximal Learnability ($\phi_{\text{learn}}$) also degrades performance, indicating that unconstrained exploration on excessively hard or trivial samples introduces harmful gradient noise. Finally, the dual-stream difficulty replay is essential for ensuring stable exploitation, which further elevates performance.

\paragraph{Mitigating Shortcut Learning (Right Answer, Wrong Reason).}
To empirically validate that CARE mitigates the ``Right Answer, Wrong Reason'' trap, we conducted a clinical consistency evaluation. We randomly sampled 500 queries from the MedQA test set where \emph{both} Standard GRPO and CARE predicted the correct final answer. The generated rationales were then evaluated by a panel of three human experts through majority voting, classifying them into: (1) Valid Causal Reasoning, (2) Spurious Shortcut (correct answer but logically insufficient or relying on statistical priors), and (3) Unrelated Hallucination. As shown in Figure \ref{fig:shortcut}(Left), although Standard GRPO yields correct outcomes on these samples, \textbf{45.0\%} of its rationales rely on spurious correlations. In contrast, CARE increases the proportion of Valid Causal Reasoning to \textbf{86.2\%}. This corroborates Theorem \ref{thm:debiasing}, demonstrating that our self-verification mechanism successfully severs the model's reliance on dataset artifacts in favor of causally-aligned deduction.

\begin{figure}[t]
    \centering
    \includegraphics[width=0.28\linewidth]{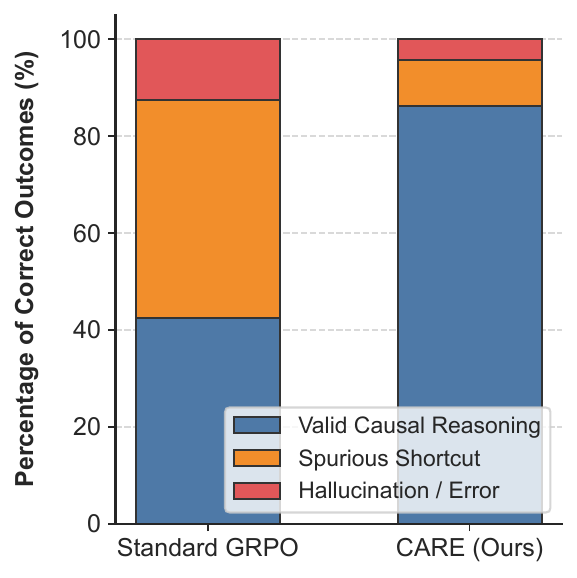}
    \includegraphics[width=0.38\linewidth]{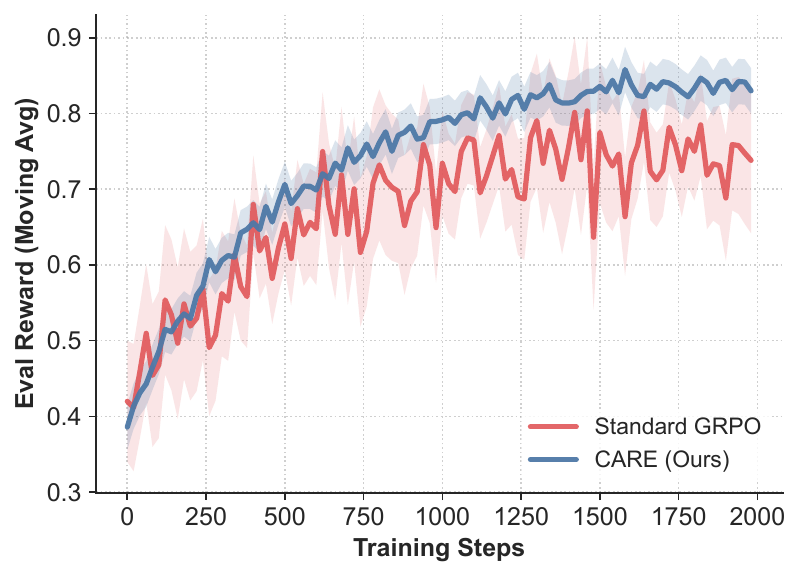}
    \includegraphics[width=0.31\linewidth]{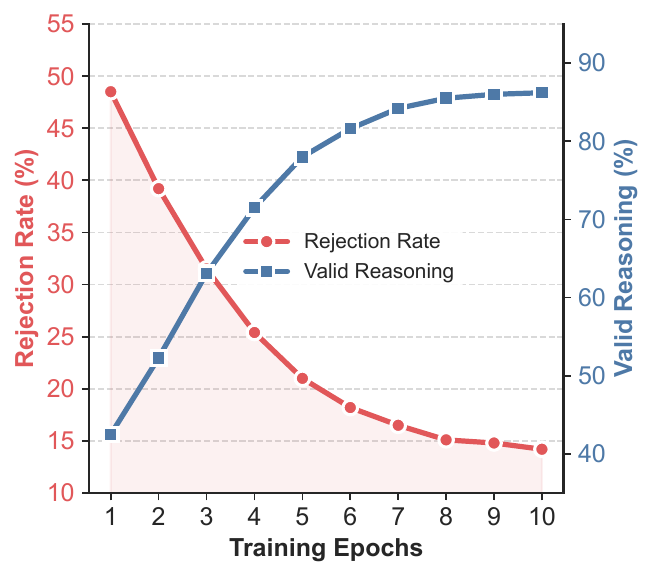}
    \caption{\small \textbf{(Left)} Reasoning Consistency Analysis: Distribution of reasoning validity for samples where the final answer was predicted correctly. \textbf{(Middle)} Training Dynamics of Standard GRPO and CARE. \textbf{(Right)} Dynamics of Causal Alignment: Self-Verification Rejection Rate (red circles) and Valid Causal Reasoning (blue squares).}
    \label{fig:shortcut}
    \label{fig:stability}
    \label{fig:verification}
\end{figure}

\paragraph{Optimization Stability via Proximal Learnability.}
In Section \ref{sec:theory}, we proved that bounding the sequence NLL stabilizes gradient variance (Theorem \ref{thm:stability}). Figure \ref{fig:stability}(Middle) visualizes this by plotting the Pass@1 evaluation accuracy on a held-out validation batch. Standard GRPO exhibits significant instability, marked by sharp spikes and reward degradation, as it indiscriminately reinforces low-probability hallucinatory trajectories that coincidentally lead to correct answers. In contrast, CARE’s Proximal Learnability filter ($\phi_{\text{learn}}$) rejects both trivial (vanishing gradient) and chaotic (exploding variance) samples. This constrains the policy to a stable region, ensuring monotonic improvement and a higher convergence ceiling.

\paragraph{Dynamics of Causal Self-Verification.}
We evaluate our agreement-based self-verification mechanism ($\phi_{\text{causal}}$) to ensure it is not excessively restrictive. Figure \ref{fig:verification} (Right) tracks the rejection rate across training epochs. Initially, 48.5\% of correct trajectories are rejected, confirming that early-stage models heavily rely on superficial shortcuts rather than logical sufficiency. As CARE reinforces causally aligned trajectories, this rejection rate steadily declines to 15\%, while valid causal reasoning simultaneously climbs to 86.2\% (Figure \ref{fig:verification}(Right), right axis). This inverse correlation shows that the model successfully internalizes structural causal constraints, shifting its behavior from dataset guessing to robust clinical deduction.

\paragraph{Empirical Validation of Gradient Variance (Theorem \ref{thm:stability}).}
We empirically validate Theorem \ref{thm:stability} by tracking policy gradient $L_2$ norms during training. As shown in Figure \ref{fig:gradient_variance}(Left), Standard GRPO suffers from frequent spikes due to indiscriminately updating on ``chaotic'' hallucinatory trajectories with unbounded NLLs that coincidentally yield correct answers. Conversely, CARE’s Proximal Learnability filter ($\phi_{\text{learn}}$) effectively rejects these variance-exploding samples. The resulting gradient $L_2$ norm remains tightly bounded and smooth, providing empirical proof for Theorem \ref{thm:stability} and explaining the superior training stability observed in our experiments.

\begin{figure}[t]
    \centering
    \includegraphics[width=0.30\linewidth]{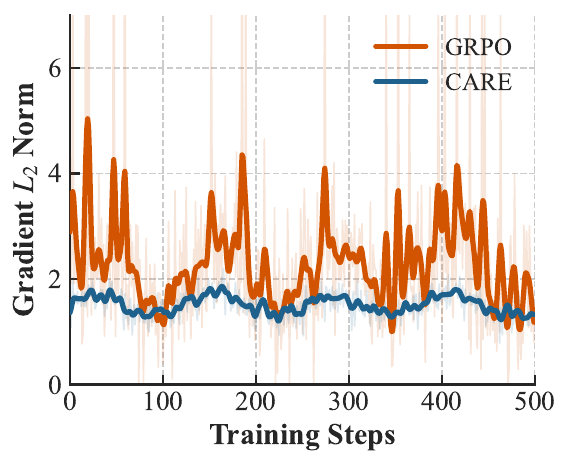}
    \includegraphics[width=0.375\linewidth]{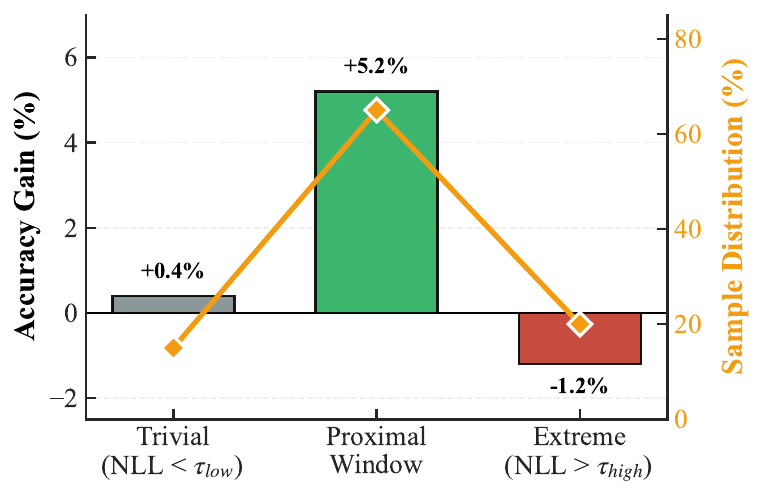}
    \includegraphics[width=0.305\linewidth]{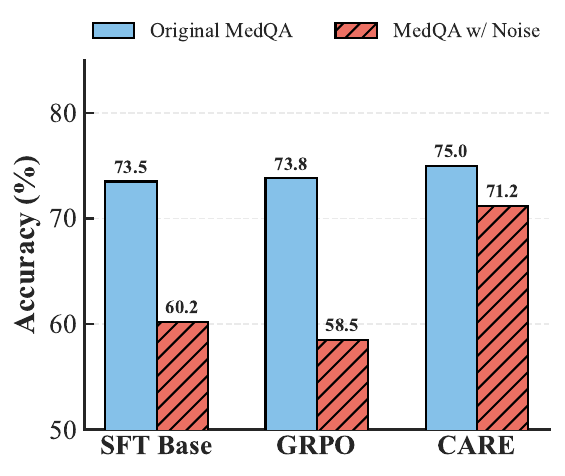}
    \caption{\small  \textbf{(Left)} Gradient $L_2$ Norm Tracking of CARE and GRPO. \textbf{(Middle)} Efficacy of the Proximal Window. \textbf{(Right)} Robustness to Spurious Clinical Distractors.}
    \label{fig:gradient_variance}
    \label{fig:learnability_regions}
    \label{fig:robustness_ood}
\end{figure}

\paragraph{Efficacy of the Proximal Development Zone.}
To further dissect \emph{why} the dynamic learnability window is crucial, we isolated the generated rollouts into three distinct NLL regions: Trivial (NLL $< \tau_{\text{low}}$), Proximal Window ($\tau_{\text{low}} \le$ NLL $\le \tau_{\text{high}}$), and Extreme Hard (NLL $> \tau_{\text{high}}$). We then conducted separate isolated training sessions utilizing only the experiences from each specific region. As shown in Figure \ref{fig:learnability_regions}(Middle), relying solely on \emph{Trivial} samples yields a negligible accuracy gain (+0.4\%), as these samples provide vanishing gradients and minimal information gain. Conversely, forcing the model to learn from \emph{Extreme Hard} samples results in a performance degradation (-1.2\%) due to the introduction of chaotic, noisy signals that corrupt learned weights. The vast majority of the performance improvement (+5.2\%) is derived entirely from the \emph{Proximal Window}. This analysis validates our dynamic filtering mechanism: efficient self-improvement requires isolating the effective learning experiences from statistical noise.

\paragraph{Causal Sufficiency vs. Spurious Shortcuts (Theorem \ref{thm:debiasing}).}
Theorem \ref{thm:debiasing} posits that our agreement-based self-verification acts as a $do$-calculus intervention to debias the gradient from spurious shortcuts.
To validate that agreement-based self-verification acts as a $do$-calculus intervention, we tested OOD robustness by injecting irrelevant symptoms and noise into MedQA queries. 
As shown in Figure \ref{fig:robustness_ood}(Right), the SFT Base model experiences a $13.3\%$ performance drop under perturbation. Strikingly, Standard GRPO suffers an even more severe drop of $15.3\%$. This perfectly demonstrates the ``Autoregressive Shortcut Trap'' (Theorem \ref{thm:failure}): unconstrained outcome-based RL inadvertently reinforces the model's reliance on superficial keyword matching, making it highly fragile.
In stark contrast, CARE exhibits remarkable robustness, degrading by only $3.8\%$. This confirms that enforcing decision-rationale consistency ($\phi_{\text{causal}}$) successfully severs the model's dependence on dataset artifacts, forcing it to extract the true causal chain required to reach the diagnosis.

\section{Conclusion}
In this work, we introduced Causally-Aligned Reasoning Exploration, a self-improvement framework designed to mitigate the \textit{causal misalignment} inherent in outcome-based medical reinforcement learning. By shifting the optimization paradigm from naive outcome monitoring to rigorous intrinsic experience curation, governed by causal sufficiency and proximal learnability, CARE ensures that policy updates are driven by valid clinical deduction rather than spurious dataset shortcuts. Extensive evaluations across diverse multimodal and text-only benchmarks demonstrate that CARE consistently outperforms strong SFT and GRPO baselines while significantly enhancing reasoning consistency.

\bibliography{reference}

@String(NeurIPS = {Adv. Neural Inform. Process. Syst.})

@inproceedings{zhang2023huatuogpt,
  author       = {Hongbo Zhang and
                  Junying Chen and
                  Feng Jiang and
                  Fei Yu and
                  Zhihong Chen and
                  Guiming Chen and
                  Jianquan Li and
                  Xiangbo Wu and
                  Zhiyi Zhang and
                  Qingying Xiao and
                  Xiang Wan and
                  Benyou Wang and
                  Haizhou Li},
  title        = {HuatuoGPT, Towards Taming Language Model to Be a Doctor},
  booktitle    = {Findings of the Association for Computational Linguistics: {EMNLP}
                  2023, Singapore, December 6-10, 2023},
  pages        = {10859--10885},
  publisher    = {Association for Computational Linguistics},
  year         = {2023},
  url          = {https://doi.org/10.18653/v1/2023.findings-emnlp.725},
  doi          = {10.18653/V1/2023.FINDINGS-EMNLP.725},
  bibsource    = {dblp computer science bibliography, https://dblp.org}
}

@article{yang2023zhongjing,
  author       = {Songhua Yang and
                  Hanjie Zhao and
                  Senbin Zhu and
                  Guangyu Zhou and
                  Hongfei Xu and
                  Yuxiang Jia and
                  Hongying Zan},
  title        = {Zhongjing: Enhancing the Chinese Medical Capabilities of Large Language
                  Model through Expert Feedback and Real-world Multi-turn Dialogue},
  journal      = {CoRR},
  volume       = {abs/2308.03549},
  year         = {2023},
  url          = {https://doi.org/10.48550/arXiv.2308.03549},
  doi          = {10.48550/ARXIV.2308.03549},
  eprinttype    = {arXiv},
  eprint       = {2308.03549},
  bibsource    = {dblp computer science bibliography, https://dblp.org}
}

@article{2024huatuogpt-o1,
  author       = {Junying Chen and
                  Zhenyang Cai and
                  Ke Ji and
                  Xidong Wang and
                  Wanlong Liu and
                  Rongsheng Wang and
                  Jianye Hou and
                  Benyou Wang},
  title        = {HuatuoGPT-o1, Towards Medical Complex Reasoning with LLMs},
  journal      = {CoRR},
  volume       = {abs/2412.18925},
  year         = {2024},
  url          = {https://doi.org/10.48550/arXiv.2412.18925},
  doi          = {10.48550/ARXIV.2412.18925},
  eprinttype    = {arXiv},
  eprint       = {2412.18925},
  bibsource    = {dblp computer science bibliography, https://dblp.org}
}

@article{2025medvlm-r1,
  author       = {Jiazhen Pan and
                  Che Liu and
                  Junde Wu and
                  Fenglin Liu and
                  Jiayuan Zhu and
                  Hongwei Bran Li and
                  Chen Chen and
                  Cheng Ouyang and
                  Daniel Rueckert},
  title        = {MedVLM-R1: Incentivizing Medical Reasoning Capability of Vision-Language
                  Models (VLMs) via Reinforcement Learning},
  journal      = {CoRR},
  volume       = {abs/2502.19634},
  year         = {2025},
  url          = {https://doi.org/10.48550/arXiv.2502.19634},
  doi          = {10.48550/ARXIV.2502.19634},
  eprinttype    = {arXiv},
  eprint       = {2502.19634},
  bibsource    = {dblp computer science bibliography, https://dblp.org}
}

@article{baichuan-m2,
  author       = {Chengfeng Dou and
                  Chong Liu and
                  Fan Yang and
                  Fei Li and
                  Jiyuan Jia and
                  Mingyang Chen and
                  Qiang Ju and
                  Shuai Wang and
                  Shunya Dang and
                  Tianpeng Li and
                  Xiangrong Zeng and
                  Yijie Zhou and
                  Chenzheng Zhu and
                  Da Pan and
                  Fei Deng and
                  Guangwei Ai and
                  Guosheng Dong and
                  Hongda Zhang and
                  Jinyang Tai and
                  Jixiang Hong and
                  Kai Lu and
                  Linzhuang Sun and
                  Peidong Guo and
                  Qian Ma and
                  Rihui Xin and
                  Shihui Yang and
                  Shusen Zhang and
                  Yichuan Mo and
                  Zheng Liang and
                  Zhishou Zhang and
                  Hengfu Cui and
                  Zuyi Zhu and
                  Xiaochuan Wang},
  title        = {Baichuan-M2: Scaling Medical Capability with Large Verifier System},
  journal      = {CoRR},
  volume       = {abs/2509.02208},
  year         = {2025},
  url          = {https://doi.org/10.48550/arXiv.2509.02208},
  doi          = {10.48550/ARXIV.2509.02208},
  eprinttype    = {arXiv},
  eprint       = {2509.02208},
  bibsource    = {dblp computer science bibliography, https://dblp.org}
}

@article{rarl-vlm,
  author       = {Tan{-}Hanh Pham and
                  Chris Ngo},
  title        = {{RARL:} Improving Medical {VLM} Reasoning and Generalization with
                  Reinforcement Learning and LoRA under Data and Hardware Constraints},
  journal      = {CoRR},
  volume       = {abs/2506.06600},
  year         = {2025},
  url          = {https://doi.org/10.48550/arXiv.2506.06600},
  doi          = {10.48550/ARXIV.2506.06600},
  eprinttype    = {arXiv},
  eprint       = {2506.06600},
  bibsource    = {dblp computer science bibliography, https://dblp.org}
}

@article{lingshu,
  author       = {LASA Team and
                  Weiwen Xu and
                  Hou Pong Chan and
                  Long Li and
                  Mahani Aljunied and
                  Ruifeng Yuan and
                  Jianyu Wang and
                  Chenghao Xiao and
                  Guizhen Chen and
                  Chaoqun Liu and
                  Zhaodonghui Li and
                  Yu Sun and
                  Junao Shen and
                  Chaojun Wang and
                  Jie Tan and
                  Deli Zhao and
                  Tingyang Xu and
                  Hao Zhang and
                  Yu Rong},
  title        = {Lingshu: {A} Generalist Foundation Model for Unified Multimodal Medical
                  Understanding and Reasoning},
  journal      = {CoRR},
  volume       = {abs/2506.07044},
  year         = {2025},
  url          = {https://doi.org/10.48550/arXiv.2506.07044},
  doi          = {10.48550/ARXIV.2506.07044},
  eprinttype    = {arXiv},
  eprint       = {2506.07044},
  bibsource    = {dblp computer science bibliography, https://dblp.org}
}

@article{hulu-med,
  author       = {Songtao Jiang and
                  Yuan Wang and
                  Sibo Song and
                  Tianxiang Hu and
                  Chenyi Zhou and
                  Bin Pu and
                  Yan Zhang and
                  Zhibo Yang and
                  Yang Feng and
                  Joey Tianyi Zhou and
                  Jin Hao and
                  Zijian Chen and
                  Ruijia Wu and
                  Tao Tang and
                  Junhui Lv and
                  Hongxia Xu and
                  Hongwei Wang and
                  Jun Xiao and
                  Bin Feng and
                  Fudong Zhu and
                  Kenli Li and
                  Weidi Xie and
                  Jimeng Sun and
                  Jian Wu and
                  Zuozhu Liu},
  title        = {Hulu-Med: {A} Transparent Generalist Model towards Holistic Medical
                  Vision-Language Understanding},
  journal      = {CoRR},
  volume       = {abs/2510.08668},
  year         = {2025},
  url          = {https://doi.org/10.48550/arXiv.2510.08668},
  doi          = {10.48550/ARXIV.2510.08668},
  eprinttype    = {arXiv},
  eprint       = {2510.08668},
  bibsource    = {dblp computer science bibliography, https://dblp.org}
}

@inproceedings{policy_gradient_methods,
  author       = {Richard S. Sutton and
                  David A. McAllester and
                  Satinder Singh and
                  Yishay Mansour},
  title        = {Policy Gradient Methods for Reinforcement Learning with Function Approximation},
  booktitle    = {Advances in Neural Information Processing Systems 12, {[NIPS} Conference,
                  Denver, Colorado, USA, November 29 - December 4, 1999]},
  pages        = {1057--1063},
  publisher    = {The {MIT} Press},
  year         = {1999},
  url          = {http://papers.nips.cc/paper/1713-policy-gradient-methods-for-reinforcement-learning-with-function-approximation},
  bibsource    = {dblp computer science bibliography, https://dblp.org}
}

@article{actor-critic,
  author       = {Thomas Degris and
                  Martha White and
                  Richard S. Sutton},
  title        = {Off-Policy Actor-Critic},
  journal      = {CoRR},
  volume       = {abs/1205.4839},
  year         = {2012},
  url          = {http://arxiv.org/abs/1205.4839},
  eprinttype    = {arXiv},
  eprint       = {1205.4839},
  bibsource    = {dblp computer science bibliography, https://dblp.org}
}

@article{ppo,
  author       = {John Schulman and
                  Filip Wolski and
                  Prafulla Dhariwal and
                  Alec Radford and
                  Oleg Klimov},
  title        = {Proximal Policy Optimization Algorithms},
  journal      = {CoRR},
  volume       = {abs/1707.06347},
  year         = {2017},
  url          = {http://arxiv.org/abs/1707.06347},
  eprinttype    = {arXiv},
  eprint       = {1707.06347},
  bibsource    = {dblp computer science bibliography, https://dblp.org}
}

@inproceedings{decision_transformer,
  author       = {Lili Chen and
                  Kevin Lu and
                  Aravind Rajeswaran and
                  Kimin Lee and
                  Aditya Grover and
                  Michael Laskin and
                  Pieter Abbeel and
                  Aravind Srinivas and
                  Igor Mordatch},
  title        = {Decision Transformer: Reinforcement Learning via Sequence Modeling},
  booktitle    = {Advances in Neural Information Processing Systems 34: Annual Conference
                  on Neural Information Processing Systems 2021, NeurIPS 2021, December
                  6-14, 2021, virtual},
  pages        = {15084--15097},
  year         = {2021},
  url          = {https://proceedings.neurips.cc/paper/2021/hash/7f489f642a0ddb10272b5c31057f0663-Abstract.html},
  bibsource    = {dblp computer science bibliography, https://dblp.org}
}

@inproceedings{RLHF,
  author       = {Long Ouyang and
                  Jeffrey Wu and
                  Xu Jiang and
                  Diogo Almeida and
                  Carroll L. Wainwright and
                  Pamela Mishkin and
                  Chong Zhang and
                  Sandhini Agarwal and
                  Katarina Slama and
                  Alex Ray and
                  John Schulman and
                  Jacob Hilton and
                  Fraser Kelton and
                  Luke Miller and
                  Maddie Simens and
                  Amanda Askell and
                  Peter Welinder and
                  Paul F. Christiano and
                  Jan Leike and
                  Ryan Lowe},
  title        = {Training language models to follow instructions with human feedback},
  booktitle    = {Advances in Neural Information Processing Systems 35: Annual Conference
                  on Neural Information Processing Systems 2022, NeurIPS 2022, New Orleans,
                  LA, USA, November 28 - December 9, 2022},
  year         = {2022},
  url          = {http://papers.nips.cc/paper\_files/paper/2022/hash/b1efde53be364a73914f58805a001731-Abstract-Conference.html},
  bibsource    = {dblp computer science bibliography, https://dblp.org}
}

@article{StarPO,
  author       = {Zihan Wang and
                  Kangrui Wang and
                  Qineng Wang and
                  Pingyue Zhang and
                  Linjie Li and
                  Zhengyuan Yang and
                  Xing Jin and
                  Kefan Yu and
                  Minh Nhat Nguyen and
                  Licheng Liu and
                  Eli Gottlieb and
                  Yiping Lu and
                  Kyunghyun Cho and
                  Jiajun Wu and
                  Li Fei{-}Fei and
                  Lijuan Wang and
                  Yejin Choi and
                  Manling Li},
  title        = {{RAGEN:} Understanding Self-Evolution in {LLM} Agents via Multi-Turn
                  Reinforcement Learning},
  journal      = {CoRR},
  volume       = {abs/2504.20073},
  year         = {2025},
  url          = {https://doi.org/10.48550/arXiv.2504.20073},
  doi          = {10.48550/ARXIV.2504.20073},
  eprinttype    = {arXiv},
  eprint       = {2504.20073},
  bibsource    = {dblp computer science bibliography, https://dblp.org}
}

@article{GRPO,
  author       = {DeepSeek{-}AI},
  title        = {DeepSeek-R1: Incentivizing Reasoning Capability in LLMs via Reinforcement
                  Learning},
  journal      = {CoRR},
  volume       = {abs/2501.12948},
  year         = {2025},
  url          = {https://doi.org/10.48550/arXiv.2501.12948},
  doi          = {10.48550/ARXIV.2501.12948},
  eprinttype    = {arXiv},
  eprint       = {2501.12948},
  bibsource    = {dblp computer science bibliography, https://dblp.org}
}

@article{exgrpo,
  author       = {Runzhe Zhan and
                  Yafu Li and
                  Zhi Wang and
                  Xiaoye Qu and
                  Dongrui Liu and
                  Jing Shao and
                  Derek F. Wong and
                  Yu Cheng},
  title        = {ExGRPO: Learning to Reason from Experience},
  journal      = {CoRR},
  volume       = {abs/2510.02245},
  year         = {2025},
  url          = {https://doi.org/10.48550/arXiv.2510.02245},
  doi          = {10.48550/ARXIV.2510.02245},
  eprinttype    = {arXiv},
  eprint       = {2510.02245},
  bibsource    = {dblp computer science bibliography, https://dblp.org}
}

@article{healthGPT,
  author       = {Tianwei Lin and
                  Wenqiao Zhang and
                  Sijing Li and
                  Yuqian Yuan and
                  Binhe Yu and
                  Haoyuan Li and
                  Wanggui He and
                  Hao Jiang and
                  Mengze Li and
                  Xiaohui Song and
                  Siliang Tang and
                  Jun Xiao and
                  Hui Lin and
                  Yueting Zhuang and
                  Beng Chin Ooi},
  title        = {HealthGPT: {A} Medical Large Vision-Language Model for Unifying Comprehension
                  and Generation via Heterogeneous Knowledge Adaptation},
  journal      = {CoRR},
  volume       = {abs/2502.09838},
  year         = {2025},
  url          = {https://doi.org/10.48550/arXiv.2502.09838},
  doi          = {10.48550/ARXIV.2502.09838},
  eprinttype    = {arXiv},
  eprint       = {2502.09838},
  bibsource    = {dblp computer science bibliography, https://dblp.org}
}

@article{openai,
  author       = {Aaron Jaech and
                  Adam Kalai and
                  Adam Lerer and
                  Adam Richardson and
                  Ahmed El{-}Kishky and
                  Aiden Low and
                  Alec Helyar and
                  Aleksander Madry and
                  Alex Beutel and
                  Alex Carney and
                  Alex Iftimie and
                  Alex Karpenko and
                  Alex Tachard Passos and
                  Alexander Neitz and
                  Alexander Prokofiev and
                  Alexander Wei and
                  Allison Tam and
                  Ally Bennett and
                  Ananya Kumar and
                  Andre Saraiva and
                  Andrea Vallone and
                  Andrew Duberstein and
                  Andrew Kondrich and
                  Andrey Mishchenko and
                  Andy Applebaum and
                  Angela Jiang and
                  Ashvin Nair and
                  Barret Zoph and
                  Behrooz Ghorbani and
                  Ben Rossen and
                  Benjamin Sokolowsky and
                  Boaz Barak and
                  Bob McGrew and
                  Borys Minaiev and
                  Botao Hao and
                  Bowen Baker and
                  Brandon Houghton and
                  Brandon McKinzie and
                  Brydon Eastman and
                  Camillo Lugaresi and
                  Cary Bassin and
                  Cary Hudson and
                  Chak Ming Li and
                  Charles de Bourcy and
                  Chelsea Voss and
                  Chen Shen and
                  Chong Zhang and
                  Chris Koch and
                  Chris Orsinger and
                  Christopher Hesse and
                  Claudia Fischer and
                  Clive Chan and
                  Dan Roberts and
                  Daniel Kappler and
                  Daniel Levy and
                  Daniel Selsam and
                  David Dohan and
                  David Farhi and
                  David Mely and
                  David Robinson and
                  Dimitris Tsipras and
                  Doug Li and
                  Dragos Oprica and
                  Eben Freeman and
                  Eddie Zhang and
                  Edmund Wong and
                  Elizabeth Proehl and
                  Enoch Cheung and
                  Eric Mitchell and
                  Eric Wallace and
                  Erik Ritter and
                  Evan Mays and
                  Fan Wang and
                  Felipe Petroski Such and
                  Filippo Raso and
                  Florencia Leoni and
                  Foivos Tsimpourlas and
                  Francis Song and
                  Fred von Lohmann and
                  Freddie Sulit and
                  Geoff Salmon and
                  Giambattista Parascandolo and
                  Gildas Chabot and
                  Grace Zhao and
                  Greg Brockman and
                  Guillaume Leclerc and
                  Hadi Salman and
                  Haiming Bao and
                  Hao Sheng and
                  Hart Andrin and
                  Hessam Bagherinezhad and
                  Hongyu Ren and
                  Hunter Lightman and
                  Hyung Won Chung and
                  Ian Kivlichan and
                  Ian O'Connell and
                  Ian Osband and
                  Ignasi Clavera Gilaberte and
                  Ilge Akkaya},
  title        = {OpenAI o1 System Card},
  journal      = {CoRR},
  volume       = {abs/2412.16720},
  year         = {2024},
  url          = {https://doi.org/10.48550/arXiv.2412.16720},
  doi          = {10.48550/ARXIV.2412.16720},
  eprinttype    = {arXiv},
  eprint       = {2412.16720},
  bibsource    = {dblp computer science bibliography, https://dblp.org}
}

@article{vision-r1,
  author       = {Wenxuan Huang and
                  Bohan Jia and
                  Zijie Zhai and
                  Shaosheng Cao and
                  Zheyu Ye and
                  Fei Zhao and
                  Zhe Xu and
                  Yao Hu and
                  Shaohui Lin},
  title        = {Vision-R1: Incentivizing Reasoning Capability in Multimodal Large
                  Language Models},
  journal      = {CoRR},
  volume       = {abs/2503.06749},
  year         = {2025},
  url          = {https://doi.org/10.48550/arXiv.2503.06749},
  doi          = {10.48550/ARXIV.2503.06749},
  eprinttype    = {arXiv},
  eprint       = {2503.06749},
  bibsource    = {dblp computer science bibliography, https://dblp.org}
}

@article{med-r1,
  author       = {Yuxiang Lai and
                  Jike Zhong and
                  Ming Li and
                  Shitian Zhao and
                  Xiaofeng Yang},
  title        = {Med-R1: Reinforcement Learning for Generalizable Medical Reasoning
                  in Vision-Language Models},
  journal      = {CoRR},
  volume       = {abs/2503.13939},
  year         = {2025},
  url          = {https://doi.org/10.48550/arXiv.2503.13939},
  doi          = {10.48550/ARXIV.2503.13939},
  eprinttype    = {arXiv},
  eprint       = {2503.13939},
  bibsource    = {dblp computer science bibliography, https://dblp.org}
}

@article{reasonmed,
  author       = {Yu Sun and
                  Xingyu Qian and
                  Weiwen Xu and
                  Hao Zhang and
                  Chenghao Xiao and
                  Long Li and
                  Yu Rong and
                  Wenbing Huang and
                  Qifeng Bai and
                  Tingyang Xu},
  title        = {ReasonMed: {A} 370K Multi-Agent Generated Dataset for Advancing Medical
                  Reasoning},
  journal      = {CoRR},
  volume       = {abs/2506.09513},
  year         = {2025},
  url          = {https://doi.org/10.48550/arXiv.2506.09513},
  doi          = {10.48550/ARXIV.2506.09513},
  eprinttype    = {arXiv},
  eprint       = {2506.09513},
  bibsource    = {dblp computer science bibliography, https://dblp.org}
}

@article{mmedlm,
  author       = {Pengcheng Qiu and
                  Chaoyi Wu and
                  Xiaoman Zhang and
                  Weixiong Lin and
                  Haicheng Wang and
                  Ya Zhang and
                  Yanfeng Wang and
                  Weidi Xie},
  title        = {Towards Building Multilingual Language Model for Medicine},
  journal      = {CoRR},
  volume       = {abs/2402.13963},
  year         = {2024},
  url          = {https://doi.org/10.48550/arXiv.2402.13963},
  doi          = {10.48550/ARXIV.2402.13963},
  eprinttype    = {arXiv},
  eprint       = {2402.13963},
  bibsource    = {dblp computer science bibliography, https://dblp.org}
}

@article{capo,
  author       = {Songtao Jiang and
                  Yuan Wang and
                  Ruizhe Chen and
                  Yan Zhang and
                  Ruilin Luo and
                  Bohan Lei and
                  Sibo Song and
                  Yang Feng and
                  Jimeng Sun and
                  Jian Wu and
                  Zuozhu Liu},
  title        = {{CAPO:} Reinforcing Consistent Reasoning in Medical Decision-Making},
  journal      = {CoRR},
  volume       = {abs/2506.12849},
  year         = {2025},
  url          = {https://doi.org/10.48550/arXiv.2506.12849},
  doi          = {10.48550/ARXIV.2506.12849},
  eprinttype    = {arXiv},
  eprint       = {2506.12849},
  bibsource    = {dblp computer science bibliography, https://dblp.org}
}

@article{metaGPT,
  author       = {Sirui Hong and
                  Xiawu Zheng and
                  Jonathan Chen and
                  Yuheng Cheng and
                  Jinlin Wang and
                  Ceyao Zhang and
                  Zili Wang and
                  Steven Ka Shing Yau and
                  Zijuan Lin and
                  Liyang Zhou and
                  Chenyu Ran and
                  Lingfeng Xiao and
                  Chenglin Wu},
  title        = {MetaGPT: Meta Programming for Multi-Agent Collaborative Framework},
  journal      = {CoRR},
  volume       = {abs/2308.00352},
  year         = {2023},
  url          = {https://doi.org/10.48550/arXiv.2308.00352},
  doi          = {10.48550/ARXIV.2308.00352},
  eprinttype    = {arXiv},
  eprint       = {2308.00352},
  bibsource    = {dblp computer science bibliography, https://dblp.org}
}

@inproceedings{chatdev,
  author       = {Chen Qian and
                  Wei Liu and
                  Hongzhang Liu and
                  Nuo Chen and
                  Yufan Dang and
                  Jiahao Li and
                  Cheng Yang and
                  Weize Chen and
                  Yusheng Su and
                  Xin Cong and
                  Juyuan Xu and
                  Dahai Li and
                  Zhiyuan Liu and
                  Maosong Sun},
  title        = {ChatDev: Communicative Agents for Software Development},
  booktitle    = {Proceedings of the 62nd Annual Meeting of the Association for Computational
                  Linguistics (Volume 1: Long Papers), {ACL} 2024, Bangkok, Thailand,
                  August 11-16, 2024},
  pages        = {15174--15186},
  publisher    = {Association for Computational Linguistics},
  year         = {2024},
  url          = {https://doi.org/10.18653/v1/2024.acl-long.810},
  doi          = {10.18653/V1/2024.ACL-LONG.810},
  bibsource    = {dblp computer science bibliography, https://dblp.org}
}

@article{memGPT,
  author       = {Charles Packer and
                  Vivian Fang and
                  Shishir G. Patil and
                  Kevin Lin and
                  Sarah Wooders and
                  Joseph E. Gonzalez},
  title        = {MemGPT: Towards LLMs as Operating Systems},
  journal      = {CoRR},
  volume       = {abs/2310.08560},
  year         = {2023},
  url          = {https://doi.org/10.48550/arXiv.2310.08560},
  doi          = {10.48550/ARXIV.2310.08560},
  eprinttype    = {arXiv},
  eprint       = {2310.08560},
  bibsource    = {dblp computer science bibliography, https://dblp.org}
}

@article{camel,
  author       = {Guohao Li and
                  Hasan Abed Al Kader Hammoud and
                  Hani Itani and
                  Dmitrii Khizbullin and
                  Bernard Ghanem},
  title        = {{CAMEL:} Communicative Agents for "Mind" Exploration of Large Scale
                  Language Model Society},
  journal      = {CoRR},
  volume       = {abs/2303.17760},
  year         = {2023},
  url          = {https://doi.org/10.48550/arXiv.2303.17760},
  doi          = {10.48550/ARXIV.2303.17760},
  eprinttype    = {arXiv},
  eprint       = {2303.17760},
  bibsource    = {dblp computer science bibliography, https://dblp.org}
}

@article{medcco,
  author       = {Shaohao Rui and
                  Kaitao Chen and
                  Weijie Ma and
                  Xiaosong Wang},
  title        = {Improving Medical Reasoning with Curriculum-Aware Reinforcement Learning},
  journal      = {CoRR},
  volume       = {abs/2505.19213},
  year         = {2025},
  url          = {https://doi.org/10.48550/arXiv.2505.19213},
  doi          = {10.48550/ARXIV.2505.19213},
  eprinttype    = {arXiv},
  eprint       = {2505.19213},
  bibsource    = {dblp computer science bibliography, https://dblp.org}
}

@article{refining,
  author       = {Muneerah Q. Alqahtani and
                  Abdullah Albarakati and
                  Fahd Alotaibi},
  title        = {Refining medical large language models: key insights from instruction
                  tuning},
  journal      = {PeerJ Comput. Sci.},
  volume       = {11},
  pages        = {e3216},
  year         = {2025},
  url          = {https://doi.org/10.7717/peerj-cs.3216},
  doi          = {10.7717/PEERJ-CS.3216},
  bibsource    = {dblp computer science bibliography, https://dblp.org}
}

@article{medprm,
  author       = {Jaehoon Yun and
                  Jiwoong Sohn and
                  Jungwoo Park and
                  Hyunjae Kim and
                  Xiangru Tang and
                  Yanjun Shao and
                  Yonghoe Koo and
                  Minhyeok Ko and
                  Qingyu Chen and
                  Mark Gerstein and
                  Michael Moor and
                  Jaewoo Kang},
  title        = {Med-PRM: Medical Reasoning Models with Stepwise, Guideline-verified
                  Process Rewards},
  journal      = {CoRR},
  volume       = {abs/2506.11474},
  year         = {2025},
  url          = {https://doi.org/10.48550/arXiv.2506.11474},
  doi          = {10.48550/ARXIV.2506.11474},
  eprinttype    = {arXiv},
  eprint       = {2506.11474},
  bibsource    = {dblp computer science bibliography, https://dblp.org}
}

@article{llm-powered,
  author       = {Henrik Voigt and
                  Yurina Sugamiya and
                  Kai Lawonn and
                  Sina Zarrie{\ss} and
                  Atsuo Takanishi},
  title        = {LLM-Powered Virtual Patient Agents for Interactive Clinical Skills
                  Training with Automated Feedback},
  journal      = {CoRR},
  volume       = {abs/2508.13943},
  year         = {2025},
  url          = {https://doi.org/10.48550/arXiv.2508.13943},
  doi          = {10.48550/ARXIV.2508.13943},
  eprinttype    = {arXiv},
  eprint       = {2508.13943},
  bibsource    = {dblp computer science bibliography, https://dblp.org}
}

@article{trpo,
  author       = {John Schulman and
                  Sergey Levine and
                  Philipp Moritz and
                  Michael I. Jordan and
                  Pieter Abbeel},
  title        = {Trust Region Policy Optimization},
  journal      = {CoRR},
  volume       = {abs/1502.05477},
  year         = {2015},
  url          = {http://arxiv.org/abs/1502.05477},
  eprinttype    = {arXiv},
  eprint       = {1502.05477},
  bibsource    = {dblp computer science bibliography, https://dblp.org}
}

@article{atari,
  author       = {Volodymyr Mnih and
                  Koray Kavukcuoglu and
                  David Silver and
                  Alex Graves and
                  Ioannis Antonoglou and
                  Daan Wierstra and
                  Martin A. Riedmiller},
  title        = {Playing Atari with Deep Reinforcement Learning},
  journal      = {CoRR},
  volume       = {abs/1312.5602},
  year         = {2013},
  url          = {http://arxiv.org/abs/1312.5602},
  eprinttype    = {arXiv},
  eprint       = {1312.5602},
  bibsource    = {dblp computer science bibliography, https://dblp.org}
}

@article{offline-rl,
  author       = {Sergey Levine and
                  Aviral Kumar and
                  George Tucker and
                  Justin Fu},
  title        = {Offline Reinforcement Learning: Tutorial, Review, and Perspectives
                  on Open Problems},
  journal      = {CoRR},
  volume       = {abs/2005.01643},
  year         = {2020},
  url          = {https://arxiv.org/abs/2005.01643},
  eprinttype    = {arXiv},
  eprint       = {2005.01643},
  bibsource    = {dblp computer science bibliography, https://dblp.org}
}

@article{autonomous-agent,
  author       = {Lei Wang and
                  Chen Ma and
                  Xueyang Feng and
                  Zeyu Zhang and
                  Hao Yang and
                  Jingsen Zhang and
                  Zhiyuan Chen and
                  Jiakai Tang and
                  Xu Chen and
                  Yankai Lin and
                  Wayne Xin Zhao and
                  Zhewei Wei and
                  Ji{-}Rong Wen},
  title        = {A Survey on Large Language Model based Autonomous Agents},
  journal      = {CoRR},
  volume       = {abs/2308.11432},
  year         = {2023},
  url          = {https://doi.org/10.48550/arXiv.2308.11432},
  doi          = {10.48550/ARXIV.2308.11432},
  eprinttype    = {arXiv},
  eprint       = {2308.11432},
  bibsource    = {dblp computer science bibliography, https://dblp.org}
}

@article{gmai-vl-r1,
  author       = {Yanzhou Su and
                  Tianbin Li and
                  Jiyao Liu and
                  Chenglong Ma and
                  Junzhi Ning and
                  Cheng Tang and
                  Sibo Ju and
                  Jin Ye and
                  Pengcheng Chen and
                  Ming Hu and
                  Shixiang Tang and
                  Lihao Liu and
                  Bin Fu and
                  Wenqi Shao and
                  Xiaowei Hu and
                  Xiangwen Liao and
                  Yuanfeng Ji and
                  Junjun He},
  title        = {{GMAI-VL-R1:} Harnessing Reinforcement Learning for Multimodal Medical
                  Reasoning},
  journal      = {CoRR},
  volume       = {abs/2504.01886},
  year         = {2025},
  url          = {https://doi.org/10.48550/arXiv.2504.01886},
  doi          = {10.48550/ARXIV.2504.01886},
  eprinttype    = {arXiv},
  eprint       = {2504.01886},
  bibsource    = {dblp computer science bibliography, https://dblp.org}
}

@article{lora,
  author       = {Edward J. Hu and
                  Yelong Shen and
                  Phillip Wallis and
                  Zeyuan Allen{-}Zhu and
                  Yuanzhi Li and
                  Shean Wang and
                  Weizhu Chen},
  title        = {LoRA: Low-Rank Adaptation of Large Language Models},
  journal      = {CoRR},
  volume       = {abs/2106.09685},
  year         = {2021},
  url          = {https://arxiv.org/abs/2106.09685},
  eprinttype    = {arXiv},
  eprint       = {2106.09685},
  bibsource    = {dblp computer science bibliography, https://dblp.org}
}

@article{advance-in-llm,
  author       = {Zhiyu Kan and
                  Wensheng Gan and
                  Zhenlian Qi and
                  Philip S. Yu},
  title        = {Advances in Large Language Models for Medicine},
  journal      = {CoRR},
  volume       = {abs/2509.18690},
  year         = {2025},
  url          = {https://doi.org/10.48550/arXiv.2509.18690},
  doi          = {10.48550/ARXIV.2509.18690},
  eprinttype    = {arXiv},
  eprint       = {2509.18690},
  bibsource    = {dblp computer science bibliography, https://dblp.org}
}

@article{ReST,
  author       = {{\c{C}}aglar G{\"{u}}l{\c{c}}ehre and
                  Tom Le Paine and
                  Srivatsan Srinivasan and
                  Ksenia Konyushkova and
                  Lotte Weerts and
                  Abhishek Sharma and
                  Aditya Siddhant and
                  Alex Ahern and
                  Miaosen Wang and
                  Chenjie Gu and
                  Wolfgang Macherey and
                  Arnaud Doucet and
                  Orhan Firat and
                  Nando de Freitas},
  title        = {Reinforced Self-Training (ReST) for Language Modeling},
  journal      = {CoRR},
  volume       = {abs/2308.08998},
  year         = {2023},
  url          = {https://doi.org/10.48550/arXiv.2308.08998},
  doi          = {10.48550/ARXIV.2308.08998},
  eprinttype    = {arXiv},
  eprint       = {2308.08998},
  bibsource    = {dblp computer science bibliography, https://dblp.org}
}

@article{deepseekmath,
  author       = {Zhihong Shao and
                  Peiyi Wang and
                  Qihao Zhu and
                  Runxin Xu and
                  Junxiao Song and
                  Mingchuan Zhang and
                  Y. K. Li and
                  Y. Wu and
                  Daya Guo},
  title        = {DeepSeekMath: Pushing the Limits of Mathematical Reasoning in Open
                  Language Models},
  journal      = {CoRR},
  volume       = {abs/2402.03300},
  year         = {2024},
  url          = {https://doi.org/10.48550/arXiv.2402.03300},
  doi          = {10.48550/ARXIV.2402.03300},
  eprinttype    = {arXiv},
  eprint       = {2402.03300},
  bibsource    = {dblp computer science bibliography, https://dblp.org}
}

@inproceedings{research-in-development,
  author       = {Chunfang Zhou and
                  Qingyue Gong and
                  Jinyang Zhu and
                  Huidan Luan},
  title        = {Research and Application of Large Language Models in HealthcareCurrent
                  Development of Large Language Models in the Healthcare FieldA Framework
                  for Applying Large Language Models and the Opportunities and Challenges
                  of Large Language Models in Healthcare: {A} Framework for Applying
                  Large Language Models and the Opportunities and Challenges of Large
                  Language Models in Healthcare},
  booktitle    = {Proceedings of the 2023 4th International Symposium on Artificial
                  Intelligence for Medicine Science, {ISAIMS} 2023, Chengdu, China,
                  October 20-22, 2023},
  pages        = {664--670},
  publisher    = {{ACM}},
  year         = {2023},
  url          = {https://doi.org/10.1145/3644116.3644226},
  doi          = {10.1145/3644116.3644226},
  bibsource    = {dblp computer science bibliography, https://dblp.org}
}

@article{rlfreason,
  author       = {Yiping Wang and
                  Qing Yang and
                  Zhiyuan Zeng and
                  Liliang Ren and
                  Lucas Liu and
                  Baolin Peng and
                  Hao Cheng and
                  Xuehai He and
                  Kuan Wang and
                  Jianfeng Gao and
                  Weizhu Chen and
                  Shuohang Wang and
                  Simon Shaolei Du and
                  Yelong Shen},
  title        = {Reinforcement Learning for Reasoning in Large Language Models with
                  One Training Example},
  journal      = {CoRR},
  volume       = {abs/2504.20571},
  year         = {2025},
  url          = {https://doi.org/10.48550/arXiv.2504.20571},
  doi          = {10.48550/ARXIV.2504.20571},
  eprinttype    = {arXiv},
  eprint       = {2504.20571},
  bibsource    = {dblp computer science bibliography, https://dblp.org}
}

@article{openai2025o3mini,
  title={Introducing OpenAI o3 and o4-mini},
  author={OpenAI, Team},
  journal={https://openai. com/index/introducing-o3-and-o4-mini/},
  year={2025}
}

@article{gpt4.1,
  title={Introducing GPT-4.1 in the API},
  author={OpenAI, Ananya Kumar and Yu, Jiahui and Hallman, John and others},
  journal={https://openai. com/index/gpt-4-1/},
  year={2025}
}

@article{gpt4o,
  author       = {Aaron Hurst and
                  Adam Lerer and
                  Adam P. Goucher and
                  Adam Perelman and
                  Aditya Ramesh and
                  Aidan Clark and
                  AJ Ostrow and
                  Akila Welihinda and
                  Alan Hayes and
                  Alec Radford and
                  Aleksander Madry and
                  Alex Baker{-}Whitcomb and
                  Alex Beutel and
                  Alex Borzunov and
                  Alex Carney and
                  Alex Chow and
                  Alex Kirillov and
                  Alex Nichol and
                  Alex Paino and
                  Alex Renzin and
                  Alex Tachard Passos and
                  Alexander Kirillov and
                  Alexi Christakis and
                  Alexis Conneau and
                  Ali Kamali and
                  Allan Jabri and
                  Allison Moyer and
                  Allison Tam and
                  Amadou Crookes and
                  Amin Tootoonchian and
                  Ananya Kumar and
                  Andrea Vallone and
                  Andrej Karpathy and
                  Andrew Braunstein and
                  Andrew Cann and
                  Andrew Codispoti and
                  Andrew Galu and
                  Andrew Kondrich and
                  Andrew Tulloch and
                  Andrey Mishchenko and
                  Angela Baek and
                  Angela Jiang and
                  Antoine Pelisse and
                  Antonia Woodford and
                  Anuj Gosalia and
                  Arka Dhar and
                  Ashley Pantuliano and
                  Avi Nayak and
                  Avital Oliver and
                  Barret Zoph and
                  Behrooz Ghorbani and
                  Ben Leimberger and
                  Ben Rossen and
                  Ben Sokolowsky and
                  Ben Wang and
                  Benjamin Zweig and
                  Beth Hoover and
                  Blake Samic and
                  Bob McGrew and
                  Bobby Spero and
                  Bogo Giertler and
                  Bowen Cheng and
                  Brad Lightcap and
                  Brandon Walkin and
                  Brendan Quinn and
                  Brian Guarraci and
                  Brian Hsu and
                  Bright Kellogg and
                  Brydon Eastman and
                  Camillo Lugaresi and
                  Carroll L. Wainwright and
                  Cary Bassin and
                  Cary Hudson and
                  Casey Chu and
                  Chad Nelson and
                  Chak Li and
                  Chan Jun Shern and
                  Channing Conger and
                  Charlotte Barette and
                  Chelsea Voss and
                  Chen Ding and
                  Cheng Lu and
                  Chong Zhang and
                  Chris Beaumont and
                  Chris Hallacy and
                  Chris Koch and
                  Christian Gibson and
                  Christina Kim and
                  Christine Choi and
                  Christine McLeavey and
                  Christopher Hesse and
                  Claudia Fischer and
                  Clemens Winter and
                  Coley Czarnecki and
                  Colin Jarvis and
                  Colin Wei and
                  Constantin Koumouzelis and
                  Dane Sherburn},
  title        = {GPT-4o System Card},
  journal      = {CoRR},
  volume       = {abs/2410.21276},
  year         = {2024},
  url          = {https://doi.org/10.48550/arXiv.2410.21276},
  doi          = {10.48550/ARXIV.2410.21276},
  eprinttype    = {arXiv},
  eprint       = {2410.21276},
  bibsource    = {dblp computer science bibliography, https://dblp.org}
}

@techreport{claude4,
  title={Introducing claude 4},
  author={Team, Anthropic},
  year={2025},
  institution={Technical report, Anthropic/Stanford CRFM}
}

@article{gemini2.5,
  author       = {Gemini Team},
  title        = {Gemini 2.5: Pushing the Frontier with Advanced Reasoning, Multimodality,
                  Long Context, and Next Generation Agentic Capabilities},
  journal      = {CoRR},
  volume       = {abs/2507.06261},
  year         = {2025},
  url          = {https://doi.org/10.48550/arXiv.2507.06261},
  doi          = {10.48550/ARXIV.2507.06261},
  eprinttype    = {arXiv},
  eprint       = {2507.06261},
  bibsource    = {dblp computer science bibliography, https://dblp.org}
}

@article{deepseekv3,
  author       = {DeepSeek{-}AI},
  title        = {DeepSeek-V3 Technical Report},
  journal      = {CoRR},
  volume       = {abs/2412.19437},
  year         = {2024},
  url          = {https://doi.org/10.48550/arXiv.2412.19437},
  doi          = {10.48550/ARXIV.2412.19437},
  eprinttype    = {arXiv},
  eprint       = {2412.19437},
  bibsource    = {dblp computer science bibliography, https://dblp.org}
}

@article{qwen2.5vl,
  author       = {Shuai Bai and
                  Keqin Chen and
                  Xuejing Liu and
                  Jialin Wang and
                  Wenbin Ge and
                  Sibo Song and
                  Kai Dang and
                  Peng Wang and
                  Shijie Wang and
                  Jun Tang and
                  Humen Zhong and
                  Yuanzhi Zhu and
                  Ming{-}Hsuan Yang and
                  Zhaohai Li and
                  Jianqiang Wan and
                  Pengfei Wang and
                  Wei Ding and
                  Zheren Fu and
                  Yiheng Xu and
                  Jiabo Ye and
                  Xi Zhang and
                  Tianbao Xie and
                  Zesen Cheng and
                  Hang Zhang and
                  Zhibo Yang and
                  Haiyang Xu and
                  Junyang Lin},
  title        = {Qwen2.5-VL Technical Report},
  journal      = {CoRR},
  volume       = {abs/2502.13923},
  year         = {2025},
  url          = {https://doi.org/10.48550/arXiv.2502.13923},
  doi          = {10.48550/ARXIV.2502.13923},
  eprinttype    = {arXiv},
  eprint       = {2502.13923},
  bibsource    = {dblp computer science bibliography, https://dblp.org}
}

@article{januspro,
  author       = {Xiaokang Chen and
                  Zhiyu Wu and
                  Xingchao Liu and
                  Zizheng Pan and
                  Wen Liu and
                  Zhenda Xie and
                  Xingkai Yu and
                  Chong Ruan},
  title        = {Janus-Pro: Unified Multimodal Understanding and Generation with Data
                  and Model Scaling},
  journal      = {CoRR},
  volume       = {abs/2501.17811},
  year         = {2025},
  url          = {https://doi.org/10.48550/arXiv.2501.17811},
  doi          = {10.48550/ARXIV.2501.17811},
  eprinttype    = {arXiv},
  eprint       = {2501.17811},
  bibsource    = {dblp computer science bibliography, https://dblp.org}
}

@article{intervl2.5,
  author       = {Zhe Chen and
                  Weiyun Wang and
                  Yue Cao and
                  Yangzhou Liu and
                  Zhangwei Gao and
                  Erfei Cui and
                  Jinguo Zhu and
                  Shenglong Ye and
                  Hao Tian and
                  Zhaoyang Liu and
                  Lixin Gu and
                  Xuehui Wang and
                  Qingyun Li and
                  Yimin Ren and
                  Zixuan Chen and
                  Jiapeng Luo and
                  Jiahao Wang and
                  Tan Jiang and
                  Bo Wang and
                  Conghui He and
                  Botian Shi and
                  Xingcheng Zhang and
                  Han Lv and
                  Yi Wang and
                  Wenqi Shao and
                  Pei Chu and
                  Zhongying Tu and
                  Tong He and
                  Zhiyong Wu and
                  Huipeng Deng and
                  Jiaye Ge and
                  Kai Chen and
                  Min Dou and
                  Lewei Lu and
                  Xizhou Zhu and
                  Tong Lu and
                  Dahua Lin and
                  Yu Qiao and
                  Jifeng Dai and
                  Wenhai Wang},
  title        = {Expanding Performance Boundaries of Open-Source Multimodal Models
                  with Model, Data, and Test-Time Scaling},
  journal      = {CoRR},
  volume       = {abs/2412.05271},
  year         = {2024},
  url          = {https://doi.org/10.48550/arXiv.2412.05271},
  doi          = {10.48550/ARXIV.2412.05271},
  eprinttype    = {arXiv},
  eprint       = {2412.05271},
  bibsource    = {dblp computer science bibliography, https://dblp.org}
}

@article{bimedix2,
  author       = {Sahal Shaji Mullappilly and
                  Mohammed Irfan Kurpath and
                  Sara Pieri and
                  Saeed Yahya Alseiari and
                  Shanavas Cholakkal and
                  Khaled Aldahmani and
                  Fahad Khan and
                  Rao Muhammad Anwer and
                  Salman Khan and
                  Timothy Baldwin and
                  Hisham Cholakkal},
  title        = {BiMediX2: Bio-Medical EXpert {LMM} for Diverse Medical Modalities},
  journal      = {CoRR},
  volume       = {abs/2412.07769},
  year         = {2024},
  url          = {https://doi.org/10.48550/arXiv.2412.07769},
  doi          = {10.48550/ARXIV.2412.07769},
  eprinttype    = {arXiv},
  eprint       = {2412.07769},
  bibsource    = {dblp computer science bibliography, https://dblp.org}
}

@article{PathVQA,
  author       = {Xuehai He and
                  Yichen Zhang and
                  Luntian Mou and
                  Eric P. Xing and
                  Pengtao Xie},
  title        = {PathVQA: 30000+ Questions for Medical Visual Question Answering},
  journal      = {CoRR},
  volume       = {abs/2003.10286},
  year         = {2020},
  url          = {https://arxiv.org/abs/2003.10286},
  eprinttype    = {arXiv},
  eprint       = {2003.10286},
  bibsource    = {dblp computer science bibliography, https://dblp.org}
}
\bibliographystyle{template_conference}
\clearpage
\appendix

\end{document}